\documentclass{article} 
\usepackage{iclr2026_conference,times}
\usepackage[hyphens]{url}
\usepackage{hyperref}
\usepackage[utf8]{inputenc} 
\usepackage[T1]{fontenc}    
\usepackage{booktabs}       
\usepackage{amsfonts}       
\usepackage{nicefrac}       
\usepackage{microtype}      
\usepackage{enumerate}
\usepackage[shortlabels]{enumitem}
\usepackage{subfigure}
\usepackage{tabularx}
\usepackage{verbatim}
\usepackage{afterpage}
\usepackage{float}
\usepackage{wrapfig}

\usepackage{cancel}
\usepackage[normalem]{ulem}
\usepackage{tcolorbox}
\tcbuselibrary{skins, breakable}
\usepackage{graphicx} 
\usepackage{caption}
\usepackage{mathrsfs}
\usepackage{tablefootnote}
\usepackage{multirow}
\usepackage{enumerate}

\usepackage{amsmath, amssymb, amsthm, bm}
\usepackage{algorithm}
\usepackage{algpseudocode}
\usepackage{tabularx}
\usepackage{color,xcolor}
\usepackage{caption}
\usepackage{tcolorbox}
\tcbuselibrary{skins, breakable}

\allowdisplaybreaks

\newtheorem{thm}{Theorem}[section]
\newtheorem{lem}{Lemma}[section]

\newtheorem{defn}{Definition}[section]

\newtheorem{remark}{Remark}[section]

\newtheoremstyle{remarkstyle}
  {}                    
  {}                    
  {\normalfont}         
  {}                    
  {\itshape}            
  {.}                   
  { }                   
  {}                    
\theoremstyle{remarkstyle}

\definecolor{wjs}{RGB}{200,0,50}

\definecolor{hyw}{RGB}{153,000,000}

\long\def\new#1{\bgroup\color{black}#1\egroup}

\usepackage{amsmath,amsfonts,bm}

\newcommand\bP{\bm{P}}
\newcommand\bQ{\bm{Q}}
\def\token{{w}}
\def\Voca{{\mathcal{W}}}

\def\1{\bm{1}}

\DeclareMathAlphabet{\mathsfit}{\encodingdefault}{\sfdefault}{m}{sl}
\SetMathAlphabet{\mathsfit}{bold}{\encodingdefault}{\sfdefault}{bx}{n}

\def\sP{{\mathbb{P}}}

\def\0{{\bf 0}}
\def\1{{\bf 1}}

\def\AM{{\mathcal A}}

\def\FM{{\mathcal F}}

\def\SM{{\mathcal S}}
\def\TM{{\mathcal T}}
\def\QM{{\mathcal Q}}

\def\RB{{\mathbb R}}

\DeclareMathOperator{\EB}{\mathbb{E}}

\def\PB{{\mathbb P}}

\def\bg{{\boldsymbol{g}}}

\def\WM{\textsf{WS}}
\def\SE{\textsf{SE}}

\def\SSE{\textsf{SSE}}

\def\Spec{{\AM}_{\mathrm{spec}}}

\newcommand{\KL}{D_{\mathrm{KL}}}

\def\TV{{\mathrm{TV}}}
\def\Ent{{\mathrm{Ent}}}

\title{Improving the Trade-off Between Watermark Strength and Speculative Sampling Efficiency for Language Models}

\author{Weiqing He\thanks{Equal contribution.} ,\quad Xiang Li\footnotemark[1] ,\quad Li Shen\thanks{Corresponding authors.} ,\quad Weijie Su\footnotemark[2] ,\quad Qi Long\footnotemark[2] \\
  University of Pennsylvania \\
  \texttt{weiqingh@sas.upenn.edu}, \texttt{lx10077@upenn.edu}, \\ \texttt{\{li.shen@pennmedicine, suw@wharton\}.upenn.edu}, \texttt{qlong@upenn.edu}
}

\iclrfinalcopy

\begin{document}

\maketitle

\begin{abstract}
Watermarking is a principled approach for tracing the provenance of large language model (LLM) outputs, but its deployment in practice is hindered by inference inefficiency. Speculative sampling accelerates inference, with efficiency improving as the acceptance rate between draft and target models increases. Yet recent work reveals a fundamental trade-off: higher watermark strength reduces acceptance, preventing their simultaneous achievement.
We revisit this trade-off and show it is not absolute. We introduce a quantitative measure of watermark strength that governs statistical detectability and is maximized when tokens are deterministic functions of pseudorandom numbers. Using this measure, we fully characterize the trade-off as a constrained optimization problem and derive explicit Pareto curves for two existing watermarking schemes.
Finally, we introduce a principled mechanism that injects pseudorandomness into draft-token acceptance, ensuring maximal watermark strength while maintaining speculative sampling efficiency. Experiments further show that this approach improves detectability without sacrificing efficiency.
Our findings uncover a principle that unites speculative sampling and watermarking, paving the way for their efficient and practical deployment.\footnote{Code is available at \url{https://github.com/hwq0726/watermark-tradeoff}.}

\end{abstract}

\section{Introduction}\label{sec:intro}

In the era of generative AI, data provenance has become a pressing concern in academia, journalism, and everyday content creation, where ensuring authenticity is both responsible and critical \citep{weidinger2021ethical,starbird2019disinformation,milano2023large,shumailov2023curse}. Watermarking offers a principled solution: it embeds verifiable signals into generated text by modifying the token sampling process with a recoverable pseudorandom number and a carefully designed sampling strategy \citep{scott2023watermarking,kirchenbauer2023watermark,kuditipudi2023robust}. A good watermarking scheme should embed a strong watermark signal, namely the dependence between sampled tokens and pseudorandom numbers.  
The degree of this dependence, referred to as watermark strength, directly affects detection efficiency \citep{li2024statistical,li2024robust}.
However, deploying watermarking in large-scale LLM inference faces two major bottlenecks: tokens are generated sequentially, each requiring a full forward pass, and the lack of parallelization leaves GPUs underutilized. Together, these bottlenecks make watermarking slow and inefficient in real-world deployment.

Speculative sampling addresses these bottlenecks by accelerating autoregressive generation without compromising quality \citep{chen2023accelerating,leviathan2023fast,xu2024can}. It employs two models: a lightweight draft model that rapidly proposes multiple candidate tokens, and a larger target model that verifies them in parallel. If most draft tokens are accepted, computation is greatly reduced; if not, the target model must regenerate them, negating the speedup. The acceptance process is stochastic, with its probability determined by how closely the draft model’s proposals align with the target model’s distribution \citep{yin2024theoretical}. Thus, high efficiency requires the two distributions to be sufficiently similar, ensuring a high acceptance rate. Unfortunately, \citet{hu2024inevitable} show that, when watermarking is combined with speculative sampling, it is impossible to simultaneously achieve both the highest acceptance rate and the strongest watermarking strength---a rather discouraging result that suggests a fundamental trade-off between watermark strength and sampling efficiency.

In this work, we ask whether and how this seemingly unavoidable trade-off can be overcome, trying to pave the way for more efficient deployment of watermarking under speculative sampling. We revisit the impossibility result and identify a potential path forward. A key limitation in \citep{hu2024inevitable} is that watermark strength is defined in a binary manner: watermarking is considered preserved if and only if each token’s distribution exactly matches a designated watermarked distribution. This definition, however, doesn't quantify how each token is coupled with a recoverable pseudorandom number. As a result, it overlooks intermediate levels of watermark strength, preventing a nuanced characterization of the trade-off and leaving open the possibility for improvement.

\vspace{-1em}
\paragraph{Contributions.}
Building on this observation, we make the following contributions:

\vspace{-0.5em}
\begin{itemize}[left=0pt]
\item \textbf{Quantifying watermark strength.} We introduce a quantitative measure of watermark strength for unbiased watermarks, defined as the expected KL divergence between the watermarked and original token distributions. We show that this measure governs the decay rate of $p$-values, is upper bounded by the entropy of the original distribution, and attains its maximum precisely when tokens are deterministic functions of pseudorandom numbers. Notably, both OpenAI’s \citep{scott2023watermarking} and Google’s \citep{Dathathri2024} watermarking schemes achieve this maximal strength.

\item \textbf{Characterizing the trade-off.}  
Based on this measure, we formalize the trade-off curve as the Pareto frontier between watermark strength and (speculative) sampling efficiency.  
Here, sampling efficiency is quantified by the acceptance rate, following prior work \citep{hu2023unbiased}.  
For illustration, we show that when both the draft and target models are ``linearly'' watermarked, this frontier can be characterized by solving a constrained convex optimization problem that maximizes watermark strength subject to a sampling efficiency requirement. 
Importantly, this formulation is general and can be applied in a plug-and-play manner to any watermarking schemes.  
As examples, we illustrate the trade-off curves for OpenAI’s and Google’s watermarking methods.

\item \textbf{Improving the trade-off.} Finally, we propose a principled mechanism to overcome this trade-off by applying pseudorandom draft-token acceptance. We prove that it achieves maximal watermark strength while preserving speculative sampling efficiency, and empirically verify that it improves detectability under the same efficiency, offering a constructive path toward practical deployment.
\end{itemize}

\vspace{-0.5em}
\paragraph{Paper organization.}
The remainder of this paper is organized as follows. Section~\ref{sec:pre} reviews preliminaries on watermarking, speculative sampling, and the previous trade-off. Section~\ref{sec:full-trade-off} introduces a quantitative measure of watermark strength and uses it to fully characterize the trade-off. Section~\ref{sec:break} presents our mechanism for improving the trade-off. Section~\ref{sec:experiment} presents experimental results that validate our mechanism. Due to space constraints, we defer the discussion of related work to Appendix~\ref{app: related works}, and provide all proofs in Appendix~\ref{app: proof for trade-off} and~\ref{app: proof for maintain both}.


\section{Preliminaries}\label{sec:pre}

For a token $\token$ in the vocabulary $\Voca$, let $\bP$ denote its distribution. A watermarking scheme can be viewed as a tractable way to modify $\bP$ using pseudorandomness \citep{hu2023unbiased}. Specifically, it samples $\token \sim \bP_{\zeta}$ from a modified distribution $\bP_{\zeta} := \SM(\bP, \zeta)$, where $\zeta$ is a pseudorandom variable and $\SM$ is a carefully designed decoding function. A scheme (or decoder) is said to be unbiased if averaging over pseudorandomness recovers the original distribution, i.e., $\EB_{\zeta}[\bP_{\zeta}] = \bP$. During detection, the task is to decide whether an observed token sequence comes from the original distribution or its watermarked modification. This naturally leads to the hypothesis testing problem:
\begin{equation}
\label{eq:hypothesis}
H_0:~\token \sim \bP~\text{and}~\token \perp \zeta
~~\text{versus}~~
H_1:~\token \sim \bP_{\zeta} = \SM(\bP, \zeta).
\end{equation}
The key idea is to test for statistical dependence between $\token$ and the pseudorandom number $\zeta$. Under $H_0$, no watermark is embedded and $\token$ is independent of $\zeta$, while under $H_1$ the watermarking mechanism induces a structured dependence. In what follows, we illustrate this framework with two popular watermarking schemes.


\vspace{-1em}
\paragraph{Gumbel-max watermark.}
The most influential unbiased watermark is the Gumbel-max watermark \citep{scott2023watermarking}. It is built on the Gumbel-max trick, a widely used sampling method for multinomial distributions \citep{gumbel1948statistical, maddison2014sampling, jang2016categorical}. The trick generates a set of independent uniform random variables $\zeta = (U_{\token})_{\token \in \Voca}$ for each token in the vocabulary $\Voca$, and ensures that $\arg\max_{\token \in \Voca} \frac{\log U_{\token}}{P_{\token}}$ follows the original distribution $\bP \equiv (P_{\token})_{\token \in \Voca}$.
Building on this observation, \citet{scott2023watermarking} proposed the following decoder: $\bP_{\zeta} = \SM^{\mathrm{gum}}(\bP, \zeta)$ where
\begin{equation}\label{eq:gumbel}
(\SM^{\mathrm{gum}}(\bP, \zeta))(\token) =
\begin{cases}
1, & \text{if } \token = \arg\max_{\token' \in \Voca} \frac{\log U_{\token'}}{P_{\token'}}, \\
0, & \text{otherwise}.
\end{cases}
\end{equation}
By construction, this watermarking scheme is unbiased \citep{li2024statistical}. 

\vspace{-1em}
\paragraph{SynthID watermark.}
The SynthID watermark, proposed by Google \citep{Dathathri2024}, is based on a novel categorical sampling rule called tournament sampling.
For a given number of tournament rounds $m$, the pseudorandom numbers is a collection of $m$ random vectors, given by $\zeta = (\bg_i)_{i=1}^m$, where each $\bg_i = (g_{i, \token})_{\token \in \Voca}$ is a binary vector with entries independently drawn from $\text{Bernoulli}(0.5)$. \new{Under the \textit{two-candidate version} of SynthID (the version we use throughout all discussions)}, the modified distribution can be defined as $\bP_{\zeta} = \SM^{\mathrm{syn}}(\bP, \zeta)$, with
\begin{equation}\label{eq:synthid}
\SM^{\mathrm{syn}}(\bP, \zeta) = \TM_{\bg_m} \circ \cdots \circ \TM_{\bg_1} (\bP),
\end{equation}
where $\TM_{\bg}$ is the operator
\begin{equation}\label{eq:synthid-operator}
(\TM_{\bg}(\bP))(\token) = P_{\token} \cdot \Big(1 + g_{\token} - \sum_{\token': g_{ \token'} =1} P_{\token'} \Big).
\end{equation}
\citet{Dathathri2024} show that $\TM_{\bg}(\bP)$ corresponds to the distribution of the winner in a one-versus-one match: two tokens $w_1, w_2$ are drawn independently from $\bP$, and the winner is the one with the larger pseudorandom bit value $g_w$. {If $g_{w_1} = g_{w_2}$, the tie is broken uniformly at random.} Repeating this tournament for $m$ rounds with independent vectors $\bg_i$ yields $\SM^{\mathrm{syn}}(\bP, \zeta)$ as the distribution of the final winner token.

\vspace{-1em}
\paragraph{Speculative sampling.}
Speculative sampling accelerates LLM inference by first drawing a draft token $w'$ from $\bQ$ and then checking it against the target distribution $\bP$ \citep{chen2023accelerating}. The draft token is accepted with probability $\min\{1, P_{w'} / Q_{w'}\}$; if it is rejected, a replacement token is sampled from a residual distribution proportional to the excess mass of $\bP$ over $\bQ$. This accept/reject process induces a transition kernel on top of $\bQ$:
\begin{equation}
\label{eq:A}
\AM(\token|\token')=
\begin{cases}
\min\left(1, \frac{P_{\token}}{Q_{\token}}\right), & \text{if } \token' = \token, \\
\frac{ \left(P_{\token} - Q_{\token}\right)_{+}}{\sum_{z} \left(P_{z} - Q_{z}\right)_{+}} \cdot \left(1 - \frac{P{\token'}}{Q_{\token'}}\right)_+, & \text{if } \token' \neq \token,
\end{cases}
\end{equation}
where $(x)_+ := \max\{x, 0\}$.
We denote this kernel as $\Spec(\bQ, \bP)$. By construction, applying it on top of $\bQ$ recovers the target distribution: $\bP = \Spec(\bQ, \bP) \circ \bQ$. The acceptance rate under this scheme is $\mathbb{P}(\text{the inital}~w'~\text{is not rejected}) = \sum_{\token} \min\{P_w, Q_w\}$, 
which is the maximum achievable among all kernels that preserve $\bP$ (see Lemma \ref{lem:speculative-sampling-optimal}).
\begin{defn}[Sampling efficiency] 
\label{def:efficiency}
Given a draft $\bQ_{\zeta}$ and transition kernel $\AM_{\zeta}$,\footnote{We use $\AM_\zeta$ to denote a general transition kernel, which is not necessarily dependent on pseudorandomness.} the sampling efficiency is the expected acceptance rate: 
\[
\SE(\bQ_{\zeta}, \AM_{\zeta}) = \EB_{\zeta}[\sum_{\token \in \Voca}\AM_{\zeta}(\token|\token) Q_{\zeta, \token} ].
\]
\end{defn}
\vspace{-1.5em}
\paragraph{An ``inevitable'' trade-off.}
\citet{hu2024inevitable} prove that speculative sampling cannot simultaneously maintain watermark strength and achieve maximal efficiency. Efficiency is measured by the expected acceptance rate (Def.~\ref{def:efficiency}), while watermark strength is defined in a binary manner.
For any target distribution $\bP$ and an unbiased decoder $\SM$, let $\bP_\zeta = \SM(\bP, \zeta)$ be the watermarked token distribution.
Watermark strength is preserved only if there exists a pair $(\SM', \AM_{\zeta})$ such that $\AM_{\zeta} \circ \bQ_{\zeta} = \bP_{\zeta}$ exactly for all pairs $(\bQ, \bP)$ with $\bQ_{\zeta} = \SM'(\bQ, \zeta)$.
Here, the decoder $\SM'$ could be different from $\SM$. Under this condition, the sampling efficiency is strictly below the maximum achievable: there must exists a pair $(\bQ, \bP)$ such that
\begin{equation}
\label{eq:no-efficiency}
\SE(\bQ_{\zeta},\AM_{\zeta}) \;<\; \sup_{(\bQ_{\zeta}',\AM_{\zeta}')}\Bigl\{\SE(\bQ'_{\zeta},\AM'_{\zeta}) : \EB_{\zeta}[\bQ'_{\zeta}] = \bQ,~ \EB_{\zeta}[\AM'_{\zeta}\!\circ \bQ'_{\zeta}] = \bP\Bigr\}.
\end{equation}
Conversely, if equality holds in \eqref{eq:no-efficiency} for all pairs $(\bQ, \bP)$, watermark strength is necessarily lost, i.e., $\AM_{\zeta} \circ \bQ_{\zeta} \neq \bP_{\zeta}$ for some pair $(\bQ, \bP)$.

\section{Complete the Trade-off Curve}
\label{sec:full-trade-off}

\subsection{Quantification of Watermark Strength}
\label{sec:watermark-strength}

As introduced in Section~\ref{sec:pre}, prior work \citep{hu2024inevitable} does not fully characterize the trade-off, as it lacks a quantitative notion of watermark strength. Their framework defines strength in a binary way: it is considered preserved only if the actual token distribution exactly matches a designated watermarked distribution. This overlooks the essential idea that strength should capture how strongly each token depends on pseudorandomness, rather than just distributional equivalence. Consequently, intermediate levels of strength cannot be represented, preventing a more nuanced trade-off analysis. Our first contribution is to introduce a quantitative definition of watermark strength, enabling a precise and continuous characterization of this trade-off.

\begin{defn}
\label{def:watermark-strength}
For a watermarking scheme that samples tokens from the modified distribution $\bP_{\zeta} = \SM(\bP, \zeta)$, its watermark strength is defined as  
\begin{equation}
\label{eq:watermark-strength}
\WM(\bP_{\zeta}) 
= \EB_{\zeta}\!\left[\KL\!\left(\bP_{\zeta} \,\|\, \bP\right)\right] 
= \EB_{\zeta}\!\left[\sum_{\token \in \Voca} P_{\zeta, \token} \log \frac{P_{\zeta, \token}}{P_{\token}}\right].
\end{equation}
where $\KL(\bP_{\zeta} \| \bP)$ denotes the Kullback-Leibler (KL) divergence between the watermarked distribution $\bP_{\zeta}$ and the original distribution $\bP$. \new{From an information theory perspective, this definition can also be viewed as the conditional KL divergence, and under the unbiasedness condition $\mathbb{E}_\zeta[\bP_\zeta]=\bP$, it is equivalent to the mutual information~$I(\token; \zeta)$.}
\end{defn}

\begin{remark}
\label{rmk:difference}
Watermark strength is conceptually different from the detection efficiency studied in \citep{li2024statistical}. Watermark strength quantifies the ideal detectability assuming the true token distributions $\bP_{t}$ are known, whereas the latter considers worst-case efficiency when each $\bP_t$ is believed to fall into a prior class without this assumption. As a result, two schemes with comparable watermark strength may still differ in detection efficiency, depending on their sensitivity around the true token distribution. In practice, the Bayesian posterior detection rule in \citep{Dathathri2024} may help bridge this gap, as it implicitly learns the token distributions from prior data.
\end{remark}

\vspace{-1em}
\paragraph{Interpretation in terms of sample complexity.}
We now explain why the notion of watermark strength in Def.~\ref{def:watermark-strength} is meaningful: it directly quantifies the difficulty of watermark detection.
In particular, detection can be formulated as a hypothesis testing problem (cf. Eq.~\eqref{eq:hypothesis}), where the task is to distinguish the original distribution $\bP$ from its watermarked version $\bP_\zeta$.
Intuitively, greater watermark strength makes this test easier.
The next theorem formalizes this intuition by showing that the average watermark strength determines the exponential decay rate of the $p$-value under the uniformly most powerful test (i.e., the likelihood ratio test), and thus the sample complexity required to reach a prescribed significance level.

\begin{thm}[Sample complexity via $p$-value decay]
\label{thm:sample}
Let $\alpha \in (0,1)$ and $\token_{1:n} = (w_1, \ldots, w_n)$. Consider the hypothesis testing problem based on $n$ independent samples:
\[
H_0: \token_{1:n} \sim \bP_1 \otimes \cdots \otimes \bP_n~\text{with}~w_t \perp \zeta_t~\forall t
~~\text{versus}~~
H_1: \token_{1:n} \sim \bP_{1,\zeta_1} \otimes \cdots \otimes \bP_{n,\zeta_n},
\]
where each $\zeta_t$ is i.i.d., and the log-likelihood ratios 
$Z_t := \log \frac{\bP_{t,\zeta_t}(\token_t)}{\bP_t(\token_t)}$ are independent, uniformly bounded, and admit a common neighborhood around zero where their moment generating functions are finite.
Assume that the average KL divergence converges: $\underline{D} := \lim\limits_{n \to \infty}\frac{1}{n} \sum_{t=1}^n \EB_{\zeta} \left[\KL(\bP_{t,\zeta} \| \bP_t)\right] < \infty$.
Then, under $H_1$, the $p$-value of the likelihood ratio test, which is the uniformly most powerful (UMP) test, satisfies
\[
\lim_{n \to \infty} -\frac{1}{n} \log(\mathrm{p}\text{-}\mathrm{value}) = \underline{D}, \quad \text{in probability}.
\]
In particular, to guarantee the $p$-value $\le \alpha$, it is necessary that $n \ge \frac{1}{\underline{D}} \log\left( \frac{1}{\alpha} \right)(1 + o(1))$.
\end{thm}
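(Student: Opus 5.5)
The likelihood ratio test rejects for large values of $S_n := \sum_{t=1}^n Z_t$, so the $p$-value is $\mathrm{p}_n(s) = \PB_{H_0}(S_n \ge s)$ evaluated at the observed value $s = S_n^{\mathrm{obs}}$ drawn under $H_1$. The plan has two parts. First, a law of large numbers under $H_1$ shows $S_n^{\mathrm{obs}}/n \to \underline{D}$. Second, a large-deviation estimate under $H_0$ shows that the null tail at level $n\underline{D}$ decays at exponential rate exactly $\underline{D}$.

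\textbf{Step 1 (alternative).} Under $H_1$, $\EB[Z_t] = \EB_{\zeta}[\KL(\bP_{t,\zeta}\|\bP_t)] =: D_t$. Since the $Z_t$ are independent and uniformly bounded, Chebyshev's inequality gives $\frac1n\sum_t (Z_t - D_t) \to 0$ in probability. Combined with $\frac1n\sum_t D_t \to \underline{D}$, this yields $S_n/n \to \underline{D}$ in probability.

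\textbf{Step 2 (upper bound on the $p$-value).} Under $H_0$, $\token_t$ is independent of $\zeta_t$ and has law $\bP_t = \EB_\zeta[\bP_{t,\zeta}]$ by unbiasedness. Hence
\[
\EB_{H_0}\bigl[e^{Z_t}\bigr] = \EB_{\zeta}\sum_{\token} P_{t,\token}\,\frac{P_{t,\zeta,\token}}{P_{t,\token}} = 1 .
\]
The exponential Markov inequality with $\lambda = 1$ then gives $\mathrm{p}_n(s) \le e^{-s}$ for every $s$, so $-\frac1n\log \mathrm{p}_n(S_n) \ge S_n/n \to \underline{D}$. This change-of-measure identity is the key structural fact: the tilt by $\lambda = 1$ converts the null into the alternative.

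\textbf{Step 3 (lower bound; the main obstacle).} This step needs $\PB_{H_0}(S_n \ge n(\underline{D}+\epsilon)) \ge e^{-n(\underline{D}+2\epsilon)}$ for large $n$. I would use the standard tilting argument of Cram\'er's theorem. Write the null probability as an expectation under the tilted measure, which is the alternative:
\[
\PB_{H_0}(A) = \EB_{H_1}\bigl[e^{-S_n}\mathbf 1_A\bigr].
\]
Take $A = \{ n(\underline{D}+\epsilon) \le S_n \le n(\underline{D}+2\epsilon)\}$. Then $\PB_{H_0}(A) \ge e^{-n(\underline{D}+2\epsilon)}\,\PB_{H_1}(A)$.

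The difficulty is that $\PB_{H_1}(A)$ is not bounded below by Step 1, because $A$ sits slightly above the mean. I would fix this by tilting to $\lambda = 1+\delta$ instead. This is where the finite-MGF neighborhood assumption is used. Let $\Lambda_t(\lambda) = \log\EB_{H_0}e^{\lambda Z_t}$. Uniform boundedness makes these functions smooth with uniformly bounded derivatives, and $\Lambda_t'(1) = D_t$. For small $\delta$, the averaged tilted mean $\frac1n\sum_t\Lambda_t'(1+\delta)$ lies in $(\underline{D}+\epsilon/2,\ \underline{D}+3\epsilon)$ for a suitable $\epsilon(\delta)$. Chebyshev's inequality under the tilted measure then gives $\PB_{1+\delta}(A') \to 1$ for a slightly widened window $A'$. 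Since $\frac1n\sum_t\Lambda_t(1+\delta) = O(\delta^2) + \delta\underline{D}$, this yields $\liminf \frac1n\log\mathrm{p}_n(n(\underline{D}+\epsilon)) \ge -(\underline{D}+C\epsilon)$ with $C\epsilon\to0$ as $\delta \to 0$.

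\textbf{Step 4 (combine).} On the event $S_n \le n(\underline{D}+\epsilon)$, whose probability tends to one by Step 1, monotonicity of the tail gives $\mathrm{p}_n(S_n) \ge \mathrm{p}_n(n(\underline{D}+\epsilon))$. Together with Step 3, $-\frac1n\log\mathrm{p}_n \le \underline{D} + C\epsilon$ with high probability; with Step 2 this gives the claimed convergence in probability. For the sample-complexity statement, $\mathrm{p}\text{-value} = e^{-n\underline{D}(1+o(1))}$, so $\mathrm{p}\text{-value} \le \alpha$ forces $n\underline{D}(1+o(1)) \ge \log(1/\alpha)$. The UMP property of the likelihood ratio test is invoked directly from the Neyman--Pearson lemma and is not reproved here.
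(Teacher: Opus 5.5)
Steps 1, 2 and 4 are sound, and the identity $\EB_{H_0}[e^{Z_t}]=1$ does give the exact Chernoff upper bound. The gap is in Step 3, at the claim that the averaged tilted mean $m_n(\delta):=\frac1n\sum_t\Lambda_t'(1+\delta)$ sits a fixed positive distance above $\underline{D}$. Smoothness, uniformly bounded derivatives and $\Lambda_t'(1)=D_t$ only give the upper side, $m_n(\delta)\le D_n+M^2\delta$. The lower side needs the averaged curvature $\frac1n\sum_t\Lambda_t''(\lambda)$ on $[1,1+\delta]$ to stay bounded away from zero. Otherwise the tilted law of $S_n/n$ still concentrates at $\underline{D}$, $\PB_{1+\delta}(A')$ need not tend to one for a window starting at $n(\underline{D}+\epsilon/2)$, and you get no lower bound on $\mathrm{p}_n(n(\underline{D}+\epsilon))$.

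This is not automatic. For $\underline{D}=0$ it fails. Take a binary vocabulary, $\bP_t$ uniform, and $\bP_{t,\zeta}=(\tfrac12\pm\eta_t,\tfrac12\mp\eta_t)$ with the sign set by a fair $\zeta$ and $\eta_t\to0$. Then $|S_n|=o(n)$ surely, so $\PB_{H_0}(S_n\ge n\epsilon)=0$ eventually and Steps 3--4 yield nothing, although the theorem's conclusion still holds.

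For $\underline{D}>0$ you can close the gap as follows.
\begin{itemize}
\item Since $|Z_t|\le M$, the $\lambda$-tilted density relative to $H_0$ lies in $[e^{-2\lambda M},e^{2\lambda M}]$, so $\Lambda_t''(\lambda)\ge e^{-2\lambda M}\mathrm{Var}_{H_0}(Z_t)$.
\item Because $\Lambda_t(0)=\Lambda_t(1)=0$, integration by parts gives $D_t=\int_0^1 s\,\Lambda_t''(s)\,ds\le\tfrac12e^{2M}\mathrm{Var}_{H_0}(Z_t)$.
\item Hence $\frac1n\sum_t\Lambda_t''(\lambda)\ge 2e^{-2M(1+\lambda)}D_n$, which for large $n$ is bounded below by a positive multiple of $\underline{D}$.
\item You can then place the window endpoints at $n(\underline{D}+c\delta)$ and $n(\underline{D}+2M^2\delta)$ for a suitable $c>0$.
\end{itemize}

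A simpler repair drops the $1+\delta$ tilt and the curvature bound. Apply your $\lambda=1$ change of measure at the observed threshold instead of a deterministic one. For every $s$,
\[
\mathrm{p}_n(s)\ge e^{-s-n\epsilon}\bigl(F_n(s)-\PB_{H_1}(S_n>s+n\epsilon)\bigr),\qquad F_n(s):=\PB_{H_1}(S_n\ge s).
\]
At $s=S_n^{\mathrm{obs}}$ one has $\PB(F_n(S_n^{\mathrm{obs}})\le\eta)\le\eta$. On the event $S_n^{\mathrm{obs}}\ge n(\underline{D}-\epsilon/2)$, whose probability tends to one by Step 1, the subtracted term is at most $\PB_{H_1}(S_n>n(\underline{D}+\epsilon/2))\to0$. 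Hence $-\frac1n\log\mathrm{p}_n\le\underline{D}+2\epsilon$ with probability at least $1-2\eta$ for large $n$, and this covers every $\underline{D}\ge0$.

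With either repair, your argument is a genuinely different and more elementary route than the paper's. The paper invokes a non-i.i.d.\ Cram\'er (G\"artner--Ellis) principle under $H_0$ and reads off $I(\underline{D})=\underline{D}$ from $\EB_{H_0}[e^{Z_t}]=1$. That route needs the averaged log-MGFs to converge to a limit $\psi$, which the hypotheses do not supply. Yours uses only the assumed convergence of $\frac1n\sum_t D_t$.
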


\vspace{-1em}
\paragraph{Maximum watermark strength.}
A natural question is which watermarking schemes achieve the largest watermark strength, as defined in Def. \ref{def:watermark-strength}.  
The next theorem shows that for unbiased watermarks, the maximum is attained if and only if the modified token distribution $\bP_{\zeta}$ is degenerate---namely, for each pseudorandom number $\zeta$, all probability mass is placed on a single token, so the generated token is a deterministic function of $\zeta$.
\begin{thm}[Maximum watermark strength]
\label{thm:watermark-strength}
If $\EB_{\zeta}[\bP_{\zeta}] = \bP$, it follows that
\[
\WM(\bP_{\zeta}) = \new{\Ent(\bP) - \EB_{\zeta}[\Ent(\bP_{\zeta})]} \le \Ent(\bP) := -\sum_{\token \in \Voca} P_{\token} \log P_{\token}.
\]
Equality holds if and only if $\Ent(\bP_{\zeta}) = 0$ almost surely.
\end{thm}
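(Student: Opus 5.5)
The plan is to expand the KL divergence inside the expectation and then use unbiasedness to collapse the cross term. For each fixed $\zeta$ we write
\[
\KL(\bP_\zeta\,\|\,\bP)=\sum_{\token} P_{\zeta,\token}\log P_{\zeta,\token}-\sum_{\token} P_{\zeta,\token}\log P_{\token}
=-\Ent(\bP_\zeta)-\sum_{\token} P_{\zeta,\token}\log P_{\token}.
\]
Taking $\EB_\zeta$ and exchanging expectation with the sum over $\Voca$ (allowed since $\Voca$ is finite), the second term becomes $-\sum_{\token}\EB_\zeta[P_{\zeta,\token}]\log P_{\token}$. The hypothesis $\EB_\zeta[\bP_\zeta]=\bP$ then turns this into $-\sum_\token P_\token\log P_\token=\Ent(\bP)$, which gives the identity $\WM(\bP_\zeta)=\Ent(\bP)-\EB_\zeta[\Ent(\bP_\zeta)]$.

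One technical point must be handled first: tokens with $P_\token=0$. Unbiasedness together with $P_{\zeta,\token}\ge 0$ forces $P_{\zeta,\token}=0$ almost surely for such $\token$. With the convention $0\log 0=0$, these terms vanish on both sides, so the sums may be restricted to the support of $\bP$ and no $\log 0$ issues arise. Conversely, for $\token$ in the support, $P_{\zeta,\token}\log P_\token$ is bounded, so the expectation is finite and linearity applies.

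For the inequality and the equality case, note that Shannon entropy of a distribution on a finite set is nonnegative, since each term $-P_{\zeta,\token}\log P_{\zeta,\token}\ge 0$ for $P_{\zeta,\token}\in[0,1]$. Hence $\EB_\zeta[\Ent(\bP_\zeta)]\ge 0$, and so $\WM(\bP_\zeta)\le \Ent(\bP)$. Equality holds iff $\EB_\zeta[\Ent(\bP_\zeta)]=0$. Because $\Ent(\bP_\zeta)$ is a nonnegative random variable, this holds iff $\Ent(\bP_\zeta)=0$ almost surely. One may add that $\Ent(\bP_\zeta)=0$ exactly when $\bP_\zeta$ is a point mass, which recovers the ``deterministic function of $\zeta$'' interpretation given before the statement.

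No step here is genuinely hard. The only place needing care is the measure-theoretic bookkeeping: exchanging $\EB_\zeta$ with the finite sum, treating zero-probability tokens, and making sure every expectation is finite. The last point holds because $0\le\Ent(\bP_\zeta)\le\log|\Voca|$.
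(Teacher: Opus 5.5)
Your proposal is correct and follows the same route as the paper: expand the KL divergence, use $\EB_\zeta[\bP_\zeta]=\bP$ to turn the cross term into $\Ent(\bP)$, then conclude from nonnegativity of entropy, with equality iff $\EB_\zeta[\Ent(\bP_\zeta)]=0$. Your extra bookkeeping on zero-probability tokens and finiteness, and your phrasing of the equality case as ``almost surely'' rather than the paper's ``for any $\zeta$'', only make the same argument more careful.
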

Interestingly, both the Gumbel-max watermark and the SynthID watermark (in the limit as $m \to \infty$) attain this upper bound.
\begin{thm}
\label{thm:examples}
The Gumbel-max watermark and the SynthID watermark (as $m \to \infty$) achieve the maximum watermark strength in Thm.~\ref{thm:watermark-strength}.
\end{thm}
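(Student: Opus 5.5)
The plan is to use Thm.~\ref{thm:watermark-strength} for both schemes: since each is unbiased, $\WM(\bP_\zeta) = \Ent(\bP) - \EB_\zeta[\Ent(\bP_\zeta)]$. It therefore suffices to show that $\EB_\zeta[\Ent(\bP_\zeta)] = 0$ for Gumbel-max, and that $\EB_\zeta[\Ent(\bP_\zeta)] \to 0$ as $m\to\infty$ for SynthID.

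\textbf{Gumbel-max.} By \eqref{eq:gumbel}, $\SM^{\mathrm{gum}}(\bP,\zeta)$ is a point mass at $\arg\max_{\token} \log U_\token / P_\token$. The maximizer is almost surely unique: the $U_\token$ are continuous and independent, and tokens with $P_\token=0$ give the value $-\infty$, so they never win. Hence $\Ent(\bP_\zeta)=0$ almost surely. Unbiasedness was already noted (via the Gumbel-max trick), so the equality case of Thm.~\ref{thm:watermark-strength} applies and $\WM = \Ent(\bP)$.

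\textbf{SynthID.} Unbiasedness of one round follows from $\EB_{\bg}[\TM_{\bg}(\bP)] = \bP$. Indeed, $\EB[g_\token] = 1/2$ and $\EB[\sum_{\token': g_{\token'}=1} P_{\token'}] = 1/2$, and since $g_\token$ appears linearly the two terms cancel. By independence of the rounds and linearity of the map $\bP\mapsto$ expectation at each stage, iterating gives $\EB_\zeta[\bP_\zeta]=\bP$ for every $m$.

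Write $\bP^{(k)}$ for the distribution after $k$ rounds. The key step is to show $\EB[\Ent(\bP^{(m)})]\to 0$. My plan is a martingale argument. By one-round unbiasedness, $\bP^{(k)}$ is a bounded vector-valued martingale in $k$, so it converges almost surely to some $\bP^{(\infty)}$. I then identify the limit as a point mass.

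Let $S_k = \sum_{\token} (P^{(k)}_\token)^2$, the collision probability. Compute $\EB[S_{k+1}\mid \bP^{(k)}] - S_k$ using \eqref{eq:synthid-operator}. Write $a=\sum_{g_\token=1} P_\token$. Each active token ($g_\token=1$) is multiplied by $2-a$ and each inactive token by $1-a$. The conditional variance of $P^{(k+1)}_\token$ is
\[
(P^{(k)}_\token)^2\,\mathrm{Var}(g_\token - a),
\qquad
\mathrm{Var}(g_\token - a) = \tfrac14\bigl(1-P^{(k)}_\token\bigr)^2 + \tfrac14\sum_{\token'\neq \token}(P^{(k)}_{\token'})^2 .
\]
This quantity is strictly positive unless $\bP^{(k)}$ is a point mass, and is bounded below by a continuous function vanishing only at point masses. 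Since $S_k\le 1$ is a submartingale, the summed increments are finite. Consequently $\sum_k \EB[\text{increment}]<\infty$, which forces every almost-sure limit point of $\bP^{(k)}$ to be a vertex of the simplex.

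Entropy is bounded by $\log|\Voca|$ and continuous, so dominated convergence gives $\EB[\Ent(\bP^{(m)})]\to 0$. Therefore $\WM\to\Ent(\bP)$.

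I expect the main obstacle to be the SynthID step: making the ``increment vanishes only at vertices'' argument rigorous, uniformly over the simplex, via compactness and continuity. Tokens with $P_\token=0$ stay at $0$ throughout, so they can simply be dropped from the vocabulary first.
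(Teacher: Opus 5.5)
Your proof is correct. It shares the paper's skeleton: Gumbel-max is degenerate by construction, and for SynthID one-round unbiasedness makes the iterates a bounded vector martingale. The key step that identifies the limit as a point mass, however, goes a genuinely different way.

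The paper argues that the almost-sure limit $\hat{\bP}$ must be a fixed point of $\TM_{\bg}$ for every $\bg$, and that only vertices of the simplex are such fixed points. Made fully rigorous, this needs three ingredients: each of the finitely many values of $\bg$ recurs infinitely often (Borel--Cantelli); $\bP\mapsto\TM_{\bg}(\bP)$ is continuous; and some $\bg$ moves every non-degenerate $\bP$. For the last, take $g_{\token}=1$ for a single $\token$ with $0<P_{\token}<1$, which gives $P_{\token}(2-P_{\token})\neq P_{\token}$.

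You instead use the collision probability as a Lyapunov function. Its one-step drift is exactly the polynomial $f(\bP)=\frac14\sum_{\token}P_{\token}^2\bigl[(1-P_{\token})^2+\sum_{\token'\neq\token}P_{\token'}^2\bigr]$, and telescoping gives $\sum_k\EB[f(\bP^{(k)})]\le 1-\sum_{\token}P_{\token}^2$.

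Your route is self-contained and quantitative. It delivers $\WM(\bP^{(m)})\to\Ent(\bP)$ directly, which is arguably the precise content of ``as $m\to\infty$''. The paper's route is shorter and describes the limiting decoder itself; that decoder's unbiasedness follows from $L^1$ convergence of the bounded martingale.

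The obstacle you anticipate is not real. Since the drift equals $f(\bP^{(k)})$, monotone convergence gives $\sum_k f(\bP^{(k)})<\infty$ almost surely, so $f(\bP^{(k)})\to 0$. Compactness of the simplex and continuity of $f$ then show that $\bP^{(k)}$ approaches the vertex set. For the entropy conclusion you do not even need the martingale convergence theorem.
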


\subsection{Trade-off Curves and Examples}
\label{sec:trade-off}

\vspace{-0.5em}
\paragraph{Formulation of trade-off curves.}
Suppose the unwatermarked draft model is $\bQ$ and the unwatermarked target model is $\bP$.  
An unbiased decoder from a family $\QM_{\mathrm{draft}}$ transforms $\bQ$ into a watermarked draft $\bQ_{\zeta} := \SM_{\mathrm{draft}}(\bQ,\zeta)$, where $\zeta$ is a pseudorandom number.  
A transition kernel $\AM_{\zeta}$ then rectifies $\bQ_{\zeta}$ so that the final distribution $\bP_{\zeta} := \AM_{\zeta}\circ\bQ_{\zeta}$ remains unbiased, i.e., $\EB_{\zeta}[\bP_{\zeta}] = \bP$.  
With these components in place, we can now introduce the trade-off curve, which is defined in terms of watermark strength (Def.~\ref{def:watermark-strength}) and sampling efficiency (Def.~\ref{def:efficiency}).

\begin{defn}[Trade-off curve]
\label{def:trade-off}
The trade-off curve is a function $T$ that maps an efficiency requirement $r$ (a lower bound on the sampling efficiency) to the largest achievable watermark strength:
\[
L(r) = \max_{\SM_{\mathrm{draft}} \in \QM_{\mathrm{draft}},~\AM_{\zeta}} 
\WM(\bP_{\zeta})
\quad\text{s.t.}\quad
\bP_{\zeta} = \AM_{\zeta}\circ\bQ_{\zeta},~~
\EB_{\zeta}[\bP_{\zeta}] = \bP,~~
\SE(\bQ_{\zeta},\AM_{\zeta}) \ge r.
\]
\end{defn}

\begin{lem}[Speculative sampling is optimal]
\label{lem:speculative-sampling-optimal}
Fix a draft model $\bQ_{\zeta}$ and a target model $\bP_{\zeta}$, we define the speculative sampling efficiency (SSE) between them as
\[
\SSE(\bQ_{\zeta}, \bP_{\zeta}) := \sup_{\AM_{\zeta}} ~ \{
\SE(\bQ_{\zeta}, \AM_{\zeta}) 
: \bP_{\zeta} = \AM_{\zeta} \circ \bQ_{\zeta}
\} = \SE(\bQ_{\zeta}, \Spec(\bQ_{\zeta}, \bP_{\zeta})). 
\]
If $\EB_{\zeta}[\bQ_{\zeta}] = \bQ$ and $\EB_{\zeta}[\bP_{\zeta}] = \bP$, then $\SSE(\bQ_{\zeta}, \bP_{\zeta})
\le 1 - \TV(\bQ, \bP) = \SSE(\bQ, \bP).$
\end{lem}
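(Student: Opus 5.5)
The lemma has two parts, and I would prove them in turn. The first is the pointwise optimality of $\Spec$. The second is the averaged bound under unbiasedness.

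\textbf{Optimality of $\Spec$.} Fix $\zeta$ and any kernel $\AM_\zeta$ with $\AM_\zeta\circ\bQ_\zeta=\bP_\zeta$, i.e.
\[
P_{\zeta,\token}=\sum_{\token'}\AM_\zeta(\token|\token')Q_{\zeta,\token'}\quad\text{for every }\token .
\]
The diagonal term satisfies
\[
\AM_\zeta(\token|\token)Q_{\zeta,\token}\le Q_{\zeta,\token},
\]
since $\AM_\zeta(\cdot|\token)$ is a probability vector. Because all terms in the sum are nonnegative, it also satisfies
\[
\AM_\zeta(\token|\token)Q_{\zeta,\token}\le P_{\zeta,\token}.
\]
Summing over $\token$ then gives
\[
\sum_\token \AM_\zeta(\token|\token)Q_{\zeta,\token}\le \sum_\token\min\{P_{\zeta,\token},Q_{\zeta,\token}\}=1-\TV(\bQ_\zeta,\bP_\zeta).
\]
Next I check that $\Spec(\bQ_\zeta,\bP_\zeta)$ attains this bound. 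Its diagonal gives
\[
\min(1,P_{\zeta,\token}/Q_{\zeta,\token})\,Q_{\zeta,\token}=\min\{P_{\zeta,\token},Q_{\zeta,\token}\}.
\]
For it to be admissible, $\Spec\circ\bQ_\zeta=\bP_\zeta$ must hold. This is the standard verification: the rejected mass $\sum_{\token'}(Q_{\zeta,\token'}-P_{\zeta,\token'})_+$ equals $\sum_z(P_{\zeta,z}-Q_{\zeta,z})_+$, and it is redistributed proportionally to $(P_{\zeta,\token}-Q_{\zeta,\token})_+$. Taking $\EB_\zeta$ of the pointwise bound shows that the supremum defining $\SSE$ equals $\SE(\bQ_\zeta,\Spec(\bQ_\zeta,\bP_\zeta))=\EB_\zeta[1-\TV(\bQ_\zeta,\bP_\zeta)]$. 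Since $\Spec$ is optimal for each $\zeta$ separately, it is optimal in expectation. The degenerate case $\bQ_\zeta=\bP_\zeta$, where the residual normalizer is zero, needs no correction, because the off-diagonal terms vanish.

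\textbf{Averaged bound.} The key step is that $(\bQ,\bP)\mapsto\TV(\bQ,\bP)=\tfrac12\sum_\token|Q_\token-P_\token|$ is jointly convex. Equivalently, $\min\{x,y\}$ is concave in $(x,y)$. By Jensen's inequality and unbiasedness,
\[
\EB_\zeta\Big[\sum_\token\min\{P_{\zeta,\token},Q_{\zeta,\token}\}\Big]\le\sum_\token\min\{\EB_\zeta P_{\zeta,\token},\EB_\zeta Q_{\zeta,\token}\}=\sum_\token\min\{P_\token,Q_\token\}=1-\TV(\bQ,\bP).
\]
Finally, applying the first part with the degenerate (constant) $\zeta$ gives $\SSE(\bQ,\bP)=1-\TV(\bQ,\bP)$.

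The only step with any subtlety is the admissibility of $\Spec$, that is, checking that it reproduces $\bP_\zeta$. That computation is routine, and I expect no real obstacle.
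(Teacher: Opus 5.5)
Your proof is correct. The first part is the same argument as the paper's: you bound $\AM_\zeta(\token\mid\token)Q_{\zeta,\token}\le\min\{P_{\zeta,\token},Q_{\zeta,\token}\}$ pointwise in $\zeta$ and note that $\Spec(\bQ_\zeta,\bP_\zeta)$ attains it. You also check that $\Spec(\bQ_\zeta,\bP_\zeta)\circ\bQ_\zeta=\bP_\zeta$, which the paper leaves implicit.

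For the averaged bound you take a different route. You use the closed form $\SSE(\bQ_\zeta,\bP_\zeta)=\EB_\zeta\big[\sum_\token\min\{P_{\zeta,\token},Q_{\zeta,\token}\}\big]$ from the first part and push the expectation through the concave $\min$. Here that step is just $\EB\min\{X,Y\}\le\min\{\EB X,\EB Y\}$.

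The paper instead uses the coupling characterization $\TV(\bQ,\bP)=\inf\PB(X\neq Y)$. Draw $\zeta$, then $X\sim\bQ_\zeta$, then $Y\sim\AM_\zeta(\cdot\mid X)$. By unbiasedness this is a coupling of $\bQ$ and $\bP$, and its agreement probability is exactly $\SE(\bQ_\zeta,\AM_\zeta)$, so $\SE\le 1-\TV(\bQ,\bP)$.

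Each route has its advantages.
\begin{itemize}
\item Your argument is fully elementary and self-contained. It does not appeal to the maximal-coupling theorem, whose proof is essentially your inequality anyway. However, it depends on the explicit formula from the first part.
\item The coupling argument works verbatim for any admissible kernel $\AM_\zeta$, not only $\Spec$, so it does not need the closed form at all. It also makes transparent why unbiasedness is the right hypothesis: the $\zeta$-mixture is just one particular coupling of $\bQ$ and $\bP$, and no coupling agrees more often than the maximal one.
\end{itemize}

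Finally, you explicitly justify $\SSE(\bQ,\bP)=1-\TV(\bQ,\bP)$ by specializing to constant $\zeta$, which the paper takes for granted.
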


As a high level, Def.~\ref{def:trade-off} defines the trade-off curve as the Pareto frontier of the achievable region in the plane of watermark strength versus sampling efficiency.  
Each boundary point gives the strongest watermark attainable under an efficiency requirement $r$.  
While the definition allows arbitrary kernels $\AM_{\zeta}$, the objective depends only on the induced distribution $\bP_{\zeta}=\AM_{\zeta}\circ\bQ_{\zeta}$.
Lemma~\ref{lem:speculative-sampling-optimal} shows that for any fixed $\bP_{\zeta}$, the speculative sampler $\Spec(\bQ_{\zeta},\bP_{\zeta})$ achieves the maximal efficiency among all $\AM_{\zeta}$ realizing $\bP_{\zeta}$.  
Thus, replacing $\AM_{\zeta}$ with $\Spec(\bQ_{\zeta},\bP_{\zeta})$ preserves watermark strength and never decreases the efficiency.
\textit{Therefore, without loss of generality, the trade-off curve can be studied by restricting to $\AM_{\zeta}=\Spec(\bQ_{\zeta},\bP_{\zeta})$ and working directly with $\bP_{\zeta}$.}

 Conceptually, we can view $\bP_\zeta$ as the output of an unbiased decoder $\SM_{\mathrm{target}}$ from a family $\QM_{\mathrm{target}}$ (in parallel to $\bQ_\zeta$). With this simplification, the trade-off curve can be reformulated as
\begin{equation} \label{eq:opt} 
L(r) = \max_{\SM_{\mathrm{draft}} \in \QM_{\mathrm{draft}},~~\SM_{\mathrm{target}} \in \QM_{\mathrm{target}}} 
\WM(\bP_\zeta)
\quad\text{s.t.}\quad
\SSE(\bQ_{\zeta}, \bP_{\zeta}) \ge r.
\end{equation} 
This formulation is clean and implementation-friendly: once the families $\QM_{\mathrm{draft}}$ and $\QM_{\mathrm{target}}$ are specified, solving \eqref{eq:opt} yields a concrete visualization of the complete trade-off curve.  

\begin{remark}
\label{remark: trade-off}
In many cases, the final distribution $\bP_{\zeta} := \AM_{\zeta}\circ\bQ_{\zeta}$ admits an explicit closed form.  
For instance, in \citep{hu2024inevitable}, it uses $\bP_{\zeta} := \Spec(\bQ, \bP) \circ\bQ_{\zeta}$ to achieve the highest sampling efficiency.
Such an explicit formula can naturally be regarded as defining a decoder $\SM_{\mathrm{target}}$, and all schemes of this type can be collected into the family $\QM_{\mathrm{target}}$.

\end{remark}

\vspace{-1em}
\paragraph{An example trade-off curve.}

Now we turn to visualizing the trade-off curve in \eqref{eq:opt} for two popular watermarking schemes. 
As discussed earlier, it suffices to specify two families of unbiased decoders for $\bQ$ and $\bP$, respectively.  
To make this concrete, we consider the linearly watermarked classes
\begin{equation}
\label{trade-off-linear}
\QM_{\mathrm{draft}}
= \{(1-\theta) \mathrm{Id} + \theta\,\SM_{\mathrm{draft}} : \theta \in [0,1]\}, 
\quad
\QM_{\mathrm{target}}
= \{(1-\gamma)\mathrm{Id} + \gamma\,\SM_{\mathrm{target}} : \gamma \in [0,1]\},
\end{equation}
where $\mathrm{Id}$ denotes the identity decoder that leaves the distribution unchanged, and 
$\SM_{\mathrm{draft}}, \SM_{\mathrm{target}}$ are prescribed unbiased decoders.
In this construction, if we write $\bQ_{\zeta} := \SM_{\mathrm{draft}}(\bQ,\zeta)$ and $\bP_{\zeta} := \SM_{\mathrm{target}}(\bP,\zeta)$, then identifying the trade-off curve $(r,L(r))$ amounts to finding its inverse curve $(L^{-1}(\rho),\rho)$, where $L^{-1}(\rho)$ is given by
\begin{equation}
\label{eq:linear-trade-off}
\begin{aligned}
L^{-1}(\rho) &\;=\;
1 - \tfrac12\,
\min_{\gamma,\theta}\;
\EB_{\zeta}\!\left\|
(1-\theta)\bQ+\theta\bQ_{\zeta}
- (1-\gamma)\bP-\gamma\bP_{\zeta}
\right\|_1\\
&\quad\text{s.t.}\quad
\EB_{\zeta}\!\big[\Ent((1-\gamma)\bP+\gamma\bP_{\zeta})\big]\le\rho.
\end{aligned}
\end{equation}
One can derive this formulation \eqref{eq:linear-trade-off} by combining Defs.~\ref{def:efficiency} and \ref{def:watermark-strength} with the transition kernel in \eqref{eq:A} and the identity $\sum_w \min\{P_w,Q_w\}=1-\TV(\bP,\bQ) \new{= 1- \frac{1}{2}\|\bP - \bQ\|_1}$.

The map $(\gamma,\theta)\mapsto\|(1-\theta)\bQ+\theta\bQ_{\zeta}-(1-\gamma)\bP-\gamma\bP_{\zeta}\|_1$ is convex, since it is the $\ell_1$ norm of an affine function.  
By contrast, entropy is concave, so the feasible set of \eqref{eq:linear-trade-off} is not convex in general.  
Yet when $\SM_{\mathrm{target}}$ is degenerate (so $\bP_{\zeta}$ is almost surely a point mass), $\EB_{\zeta}[\Ent((1-\gamma)\bP+\gamma\bP_{\zeta})]$ decreases monotonically in $\gamma$.  
In this case, the constraint reduces to $\gamma \ge \gamma_0$, where $\gamma_0$ is the unique threshold satisfying $\EB_{\zeta}[\Ent((1-\gamma_0)\bP+\gamma_0\bP_{\zeta})]=\rho$, and the problem \eqref{eq:linear-trade-off} simplifies to
\[
L^{-1}(\rho)
= 1 - \tfrac12\,
\min_{\theta\in[0,1],\,\gamma\in[\gamma_0,1]}\;
\EB_{\zeta}\!\left\|
(1-\theta)\bQ+\theta\bQ_{\zeta}
- (1-\gamma)\bP-\gamma\bP_{\zeta}
\right\|_1.
\]

\vspace{-1em}
\paragraph{Comparisons of trade-off curves.} 
In Fig.~\ref{fig:trade-off-line}, we plot the trade-off curves for simulated $ \bQ $ and $ \bP $ (see Appendix~\ref{app: simulation setup} for the details). The left panel shows the curve for the linearly watermarked classes \eqref{trade-off-linear}. The green cross at the lower right marks the sampling efficiency of standard speculative sampling, and the two blue points mark the watermark strengths achieved by Gumbel-max and SynthID, respectively. 
Unless stated otherwise, we set tournament rounds $m=\infty $ for SynthID. As shown in Thm.~\ref{thm:examples}, both watermarks attain the same maximal watermark strength, so the blue points lie on the same horizontal line.


The right panel shows trade-off curves for two additional classes: one from \citet{hu2024inevitable} (“Hu’s class”) and one from \citet{Dathathri2024} (“Google’s class”). While the original works describe how to attain the endpoints of these curves, our framework connects them via a similar linearly interpolated class (see Appendix~\ref{app: watermarked class} for explicit expressions), enabling direct comparison. The results show that \textit{Google’s class achieves higher watermark strength than Hu’s at matched sampling efficiency, yet neither reaches the theoretical optimum (red star)}. Moreover, when we set $m=30$—a practical choice for SynthID—and apply it to Google’s class, the watermark strength drops below that of Gumbel-max (see the lower gray curve), consistent with Thm.~\ref{thm:examples}. This is expected, as the maximal watermark strength is attained only in the limit $m \to \infty$. 


\begin{figure}
    \centering
    \includegraphics[width=0.9\linewidth]{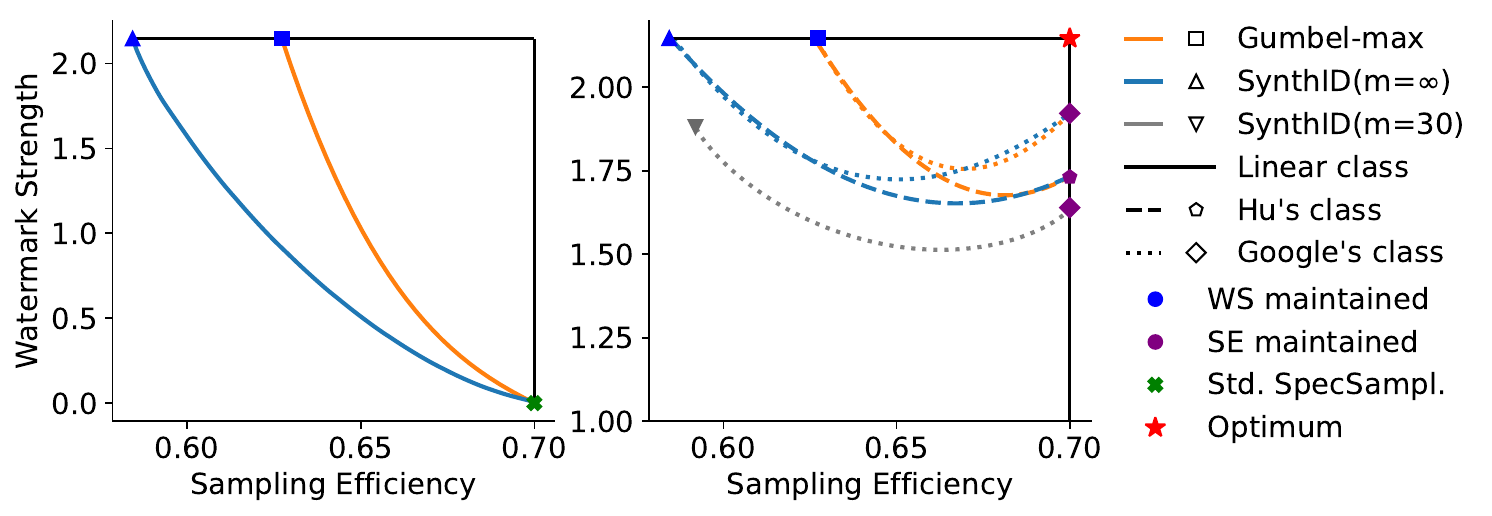}
    \caption{Trade-off curves between watermark strength and sampling efficiency for simulated $(\bQ, \bP)$ pairs. \textbf{Left}: Curves for the linearly watermarked classes (\ref{trade-off-linear}). \textbf{Right}: Curves for other classes, including Hu's class~\citep{hu2024inevitable} and Google's class~\citep{Dathathri2024}. Orange and blue denote Gumbel-max and SynthID, respectively. Here, solid, dashed, and dotted lines correspond to different classes. Markers indicate the boundary point of each curve.}
    \label{fig:trade-off-line}
\end{figure}

\section{Improving the Trade-off Curve}
\label{sec:break}


\subsection{Breaking the Trade-off through Pseudorandom Acceptance}
\label{sec:introduce-r}

\vspace{-0.5em}
\paragraph{Motivation.}
With the complete trade-off curve established in Section~\ref{sec:full-trade-off}, we now ask whether it can be broken---that is, whether speculative sampling can be used in watermarking to simultaneously attain the largest watermark strength and the highest sampling efficiency (SSE).
From Lemma~\ref{lem:speculative-sampling-optimal}, the maximal SSE for a draft–target pair $(\bQ,\bP)$ is $1-\TV(\bQ,\bP)$. Existing approaches that achieve this bound rely on the transition kernel $\Spec(\bQ,\bP)$ in \eqref{eq:A}, which accepts a draft token $w'$ with probability $\min\{1, P_{w'}/Q_{w'}\}$ \citep{hu2024inevitable,Dathathri2024}.
However, this mechanism leaves residual randomness: even with full knowledge of the pseudorandomness in both the watermarked draft and target models, the final token is not predetermined, since it may or may not be the draft token depending on the acceptance coin flip.
This inherent randomness weakens watermark strength, because under Def.~\ref{def:efficiency}, any distribution attaining maximal watermark strength must be degenerate, placing all its mass on a single token.
Motivated by this observation, we propose a new approach that preserves both goals: we make the \emph{acceptance decision itself} pseudorandom, so that the entire generation process becomes a deterministic function of pseudorandom variables.

\begin{algorithm}[!t]
\caption{Fast watermarked speculative sampling with pseudorandom acceptance}
\label{alg:watermarked-spec}
\begin{algorithmic}[1]
\State \textbf{Given:} lookahead $K$, output length $N$, target model $\bP$, draft model $\bQ$, initial prompt $w_{1:n}$, watermarked models $\bQ_{\zeta^{D}}:=\SM(\bQ, \zeta^D)$ and $\bP_{\zeta^{T}}:=\SM(\bP, \zeta^T)$,  residual sampler $(\bP-\bQ)_{+,\zeta^{T}}:=\SM((\bP-\bQ)_{+}, \zeta^T)$, and pseudorandom generator $G$.
\While{$n < N$}
  \Comment{{\color{blue}Draft $K$ tokens under the watermarked draft model}}
  \For{$s = 1$ \textbf{to} $K$}
    \State Sample draft token $\tilde{w}_s \sim \bQ_{\zeta_{n+s}^{D}}(\cdot \mid w_{1:n}, \tilde{w}_{1:s-1})$.
  \EndFor
  \State \textbf{In parallel:}  compute $K+1$ sets of target logits from draft tokens, i.e., $ \bP(\cdot \mid w_{1:n}),\; \bP(\cdot \mid w_{1:n}, \tilde{w}_1),\;\dots,\; \bP(\cdot \mid w_{1:n}, \tilde{w}_{1:K}).$
  \For{$s = 1$ \textbf{to} $K$} \Comment{{\color{blue}Sequentially try to accept each draft token}} \label{alg:accept-loop-begin}
\State 
\tcbox[
  on line,           
  colback=cyan!25,    
  colframe=cyan!25,   
  boxrule=0pt,       
  arc=4pt,           
  left=0pt, right=0pt, top=0pt, bottom=0pt 
]{Compute pseudorandom $U(0,1)$ variable: $u_{n+s} \gets G(\zeta^{R}_{n+s}) \in (0,1)$.} \label{eq:key-difference}
    \If{$u_{n+s} < \min\!\left\{1,\, \frac{\bP(\tilde{w}_s \mid w_{1:n})}{\bQ(\tilde{w}_s \mid w_{1:n})}\right\}$}  \label{alg:smaller-uniform}
      \State Accept: set $w_{n+1} \gets \tilde{w}_s$; \; $n \gets n+1$.
    \Else
      \State Reject: sample $w_{n+1} \sim (\bP-\bQ)_{+,\zeta_n^{T}}(\cdot \mid w_{1:n})$;\; $n \gets n+1$; \textbf{break}.
    \EndIf
  \EndFor \label{alg:accept-loop-end}
  \If{\textbf{all} $\tilde{w}_1,\dots,\tilde{w}_K$ were accepted}
    \Comment{{\color{blue}Bonus step as in speculative decoding}}
    \State Sample one extra token $w_{n+1} \sim \bP_{\zeta_{n}^{T}}(\cdot \mid w_{1:n})$; \; $n \gets n+1$. \label{alg:bonus}
  \EndIf
\EndWhile
\end{algorithmic}
\end{algorithm}

\vspace{-1em}
\paragraph{Algorithm description.}
We formally present our method in Alg.~\ref{alg:watermarked-spec}.
The algorithm is driven by a pseudorandom variable with three components $\zeta = (\zeta^D, \zeta^T, \zeta^R)$.
The first two components, $\zeta^D$ and $\zeta^T$, determine the watermarked distributions: $\zeta^D$ controls sampling from the draft model $\bQ_{\zeta^D}$, while $\zeta^T$ controls sampling from the target model $\bP_{\zeta^T}$ and from the residual distribution $(\bP-\bQ)_{+,{\zeta^T}}$ when draft tokens are rejected.
The third component, $\zeta^R$, governs acceptance decisions for draft tokens. In particular, at each step $s$, we compute the acceptance variable $u_{t} = G(\zeta^R_{t})$, where $G$ is a pseudorandom number generator producing values uniformly in $[0,1]$. \new{Additionally, to further ensure the unbiasedness of the entire generated sequence, we apply repeated context masking~\citep{hu2023unbiased, Dathathri2024, hu2024inevitable} in Alg.~\ref{alg:watermarked-spec}, which skips watermarking for repeated contexts.}

The key difference from \citep{Dathathri2024} is that the acceptance variable $u$ is now pseudorandom rather than truly random (line \ref{eq:key-difference}).  
As a result, Alg.~\ref{alg:watermarked-spec} becomes a fully deterministic function of pseudorandom variables, with no external randomness involved.  
We show that this modification preserves unbiasedness and, in theory, attains the maximal possible SSE (Thm.~\ref{thm:unbiasd and maintain both}).


\begin{thm}
\label{thm:unbiasd and maintain both}
Focus on a single intermediate step $s$ and omit the index for brevity.
Let $\bP$ be a target model and $\bQ$ a draft model.  
Assume the decoder $\SM$ is unbiased and achieves the largest watermark strength (hence it is degenerate by Thm.~\ref{thm:watermark-strength}).  
Define the target and draft watermarked distributions by $\bP_{\zeta^T} = \SM(\bP,\zeta^T)$ and $\bQ_{\zeta^D} = \SM(\bQ,\zeta^D)$ respectively.
Let $\AM_{\zeta}$ denote the transition kernel introduced in Alg.~\ref{alg:watermarked-spec}, and let $\bP'_{\zeta}$ denote the distribution of the output token with $\zeta:=(\zeta^D, \zeta^T, \zeta^R)$.
Suppose $\zeta^D, \zeta^T,$ and $\zeta^R$ are independent.  
Then the following properties hold:
\begin{enumerate}[label=(\alph*)]
    \item \textbf{Unbiasedness:} for every token $\token \in \Voca$, we have $\EB_{\zeta}[\bP'_{\zeta}(\token)] = \bP(\token)$. 
    \item \textbf{Maximum sampling efficiency:} $\SE(\bQ_{\zeta^D},\AM_{\zeta}) = 1 - \TV(\bQ,\bP)$.
    \item \textbf{Maximum watermark strength:} $\WM(\bP'_{\zeta}) = \Ent(\bP)$.
\end{enumerate}
\end{thm}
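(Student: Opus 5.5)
The plan is to exploit the fact that, at a single step, the output token of Alg.~\ref{alg:watermarked-spec} is an explicit deterministic function of $\zeta=(\zeta^D,\zeta^T,\zeta^R)$. Since $\SM$ is degenerate, $\bQ_{\zeta^D}$ is a point mass at some token $\token'=f(\zeta^D)$. Unbiasedness of $\SM$ gives $\PB_{\zeta^D}(\token'=\token)=Q_\token$. Likewise, the residual sampler $\SM((\bP-\bQ)_+,\zeta^T)$ is a point mass at some $g(\zeta^T)$. Here I use that $\SM$ is degenerate and unbiased for every input distribution, in particular for the normalized residual $(\bP-\bQ)_+/\sum_z(P_z-Q_z)_+$. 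Finally, $u=G(\zeta^R)\sim U(0,1)$. The output is therefore
\[
\token_{\mathrm{out}}=\token'\cdot\1\{u<\min(1,P_{\token'}/Q_{\token'})\}+g(\zeta^T)\cdot\1\{u\ge \min(1,P_{\token'}/Q_{\token'})\},
\]
read as a case split, and $\bP'_\zeta=\delta_{\token_{\mathrm{out}}}$.

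For (a), I would condition on $\zeta^D$. By independence of $\zeta^R$ from $\zeta^D$, the acceptance probability given $\token'$ is exactly $\min(1,P_{\token'}/Q_{\token'})$. By independence of $\zeta^T$ from $(\zeta^D,\zeta^R)$, the rejected branch outputs $\token$ with probability $(P_\token-Q_\token)_+/\sum_z(P_z-Q_z)_+$ after averaging over $\zeta^T$. Averaging over $\token'\sim\bQ$ reproduces the kernel $\Spec(\bQ,\bP)$ in \eqref{eq:A} applied to $\bQ$. This gives $\EB_\zeta[\bP'_\zeta(\token)]=(\Spec(\bQ,\bP)\circ\bQ)(\token)=P_\token$. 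The degenerate case $\bP=\bQ$ needs a separate remark: there the rejection branch has probability zero, so the undefined residual is never used.

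For (b), I would unpack Def.~\ref{def:efficiency}. The expected acceptance rate is $\EB_\zeta[\1\{u<\min(1,P_{\token'}/Q_{\token'})\}]$. Conditioning on $\zeta^D$ and using independence of $\zeta^R$, this equals
\[
\sum_\token Q_\token\min(1,P_\token/Q_\token)=\sum_\token\min(P_\token,Q_\token)=1-\TV(\bQ,\bP).
\]
This attains the bound of Lemma~\ref{lem:speculative-sampling-optimal}. For (c), $\bP'_\zeta$ is a point mass for every $\zeta$, so $\Ent(\bP'_\zeta)=0$ almost surely. Combined with unbiasedness from (a), Thm.~\ref{thm:watermark-strength} yields $\WM(\bP'_\zeta)=\Ent(\bP)-0=\Ent(\bP)$.

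The main obstacle is the bookkeeping in (a) and (b). One must make precise how $\AM_\zeta$ and $\SE$ are read when the acceptance coin is itself part of $\zeta$: acceptance is averaged over $\zeta^R$ inside the expectation in Def.~\ref{def:efficiency}. One must also justify the factorization of the joint law by independence of the three components. I would handle this by writing everything as a single expectation over $\zeta$ of indicator functions and applying the tower property in the order $\zeta^D$, then $\zeta^R$, then $\zeta^T$.
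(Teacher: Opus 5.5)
Your proposal is correct and follows essentially the same route as the paper. You write the single-step output in terms of the draft token, the acceptance indicator $\1\{G(\zeta^R)<\min(1,P_{\token}/Q_{\token})\}$ and the residual sample; unbiasedness of $\SM$ and independence of $\zeta^D,\zeta^T,\zeta^R$ then give standard speculative sampling for (a) and $\sum_{\token}\min(P_{\token},Q_{\token})$ for (b), and degeneracy with Thm.~\ref{thm:watermark-strength} gives (c). The paper also handles the bonus-step case, where the token is drawn from $\bP_{\zeta^T}$; there (a) and (c) follow at once because $\bP_{\zeta^T}$ is unbiased and degenerate, so you may want to add that line.
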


\subsection{Detection under Pseudorandom Acceptance}
\label{sec:watermark-detection}

Alg.~\ref{alg:watermarked-spec} introduces a new pseudorandom component $\zeta^R$, which can also be used to enhance watermark detection.
As discussed in Remark~\ref{rmk:difference}, \textit{watermark strength is conceptually distinct from detection efficiency (detectability)}.
While Thm.~\ref{thm:unbiasd and maintain both} shows that our algorithm preserves the maximum sampling efficiency and attains the maximum watermark strength—thus breaking the trade-off in theory—this does not guarantee optimal detection efficiency.
In principle, the most powerful detector is the log-likelihood ratio test, but it is impractical since it requires access to the true token distributions $\bP_t$ \citep{huang2023towards,li2024statistical}.
Nonetheless, the extra information encoded in $\zeta^R$ reduces uncertainty about the token generation process and can therefore improve detectability.
Next, we show how to use $\zeta^R$ to improve detectability for Gumbel-max and SynthID. In Section~\ref{sec:experiment}, we provide empirical evidence that our method indeed enhances detection efficiency.


\vspace{-1em}
\paragraph{Gumbel-max watermark.}  
As described in Section~\ref{sec:pre}, the pseudorandom variables for the Gumbel-max watermark (i.e., $\zeta_t^D,\zeta_t^T$) assign i.i.d.\ $U(0,1)$ values to all tokens, and the decoder selects the token $\token_t$ that solves the argmax in~\eqref{eq:gumbel}.  
For detection, the corresponding $U(0,1)$ value is extracted as a test statistic \citep{scott2023watermarking}. When watermarked, it tends to concentrate near one; otherwise, it remains uniform.  
With speculative sampling, however, two candidate statistics arise for each token $w_t$: one from the draft model ($y_t^D \in \RB$) and one from the target model or the residual distribution ($y_t^T \in \RB$).  
In our algorithm, $w_t$ comes from the draft model iff $u_t =G(\zeta_t^R)\le\tau$ (see line~\ref{alg:smaller-uniform} in Alg.~\ref{alg:watermarked-spec})\footnote{Note that the tokens generated during bonus steps are not controlled by the acceptance variable. However, as long as the lookahead $K$ is not very small (e.g., $K=1$), the sampling process rarely enters a bonus step, so its impact on detection is negligible in practice.}, we naturally select $y_t$ by
\begin{equation}
\label{gumbel-selection1}
y_t = y_t^D\mathbf{1}_{G(\zeta_t^R)<\tau} + y_t^T\mathbf{1}_{G(\zeta_t^R)\geq\tau},
\end{equation}
\new{where $\tau$ is calibrated on a held-out validation set by grid-searching over candidate values and selecting the one that achieves the highest true positive rate (TPR) under the desired false positive rate (FPR).}  
In contrast, without access to $u_t$, one must rely on the empirical acceptance rate \citep{Dathathri2024}, selecting
\begin{equation}
\label{gumbel-selection2}
y_t = y_t^D \;\;\text{with probability } p, \qquad
y_t = y_t^T \;\;\text{with probability } 1-p,
\end{equation}
where $p$ is estimated from observed \new{acceptance rates}. After $y_t$ is chosen, detection proceeds by applying the classic test of \citet{scott2023watermarking}, which flags watermarking when $\sum_t -\log(1-y_t)$ is stochastically larger than expected.  
We refer to the detector based on rule~\eqref{gumbel-selection1} as \textbf{Ars-$\tau$} and the one based on rule~\eqref{gumbel-selection2} as \textbf{Ars-Prior}. Empirically, the two differ in efficiency.

\vspace{-1em}
\paragraph{SynthID watermark.}  
A similar challenge of selecting the correct test statistic (i.e., the pseudorandom numbers that generated token $\token_t$) also arises in the SynthID watermark.  
Recall that each pseudorandom variable $\zeta_t^D=(\bg_{t,i}^D)_{i=1}^m$ consists of $m$ random binary vectors, and the test statistic is defined as $y_t^D = (\bg_{t,1}^D(w_t), \ldots, \bg_{t,m}^D(w_t)) \in \RB^m$, which collects the $w_t$-th components of all vectors; the counterpart $y_t^T \in \RB^m$ is defined analogously.  
The key detection principle is that, for the correct pseudorandom variable (e.g., $y_t^D$), entries are biased toward one, while for the incorrect one they remain uniformly random in $\{0,1\}$.  
This bias stems from the tournament sampling process: since the winning token must repeatedly have larger $g$-values across $m$ rounds, the final $w_t$ tends to carry more ones in its associated vector.

In the prior detection of \citet{Dathathri2024}, a Bayesian scoring neural network in $\RB^m$ is trained, and under speculative sampling, the two scores from $y_t^D$ and $y_t^T$ are combined through a simple weighted average (see Appendix~\ref{app: Bayesian} for the details).  
This averaging dilutes the signal and reduces detection efficiency.  
In contrast, our algorithm has access to the acceptance variable $u_t$, which, while not directly revealing the source model of $w_t$, carries signals about whether the token was generated by the draft or target model.  
Since the exact threshold (i.e., $\min\{1, P_w/Q_w\}$) separating the two cases is unknown, we treat this as a binary classification problem: given $(y_t^D,y_t^T)$ and $u_t$, a three-layer perceptron (MLP) is trained to select the correct statistic rather than averaging.  
We refer to this enhanced method as \textbf{Bayes-MLP}, and to the prior approach as \textbf{Bayes-Prior}.

\section{Experiments}
\label{sec:experiment}

\begin{figure}[htbp]
    \centering
    \begin{minipage}[b]{0.32\textwidth}
        \centering
        \includegraphics[width=\linewidth]{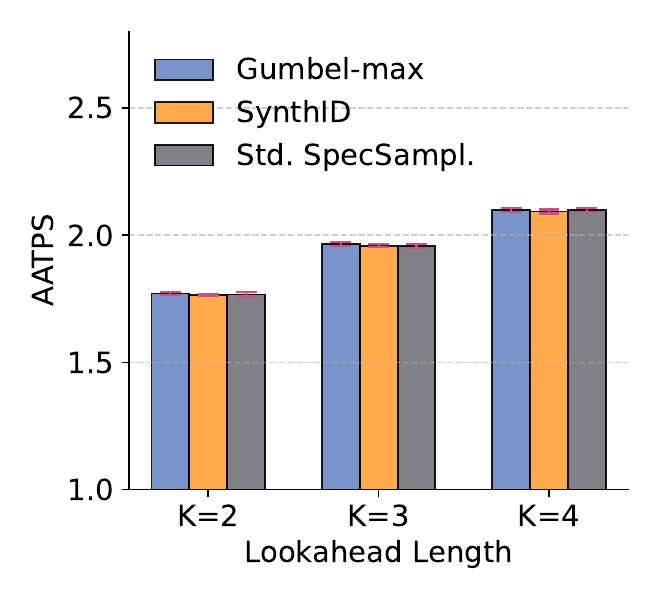}
    \end{minipage}
    \begin{minipage}[b]{0.32\textwidth}
        \centering
        \includegraphics[width=\linewidth]{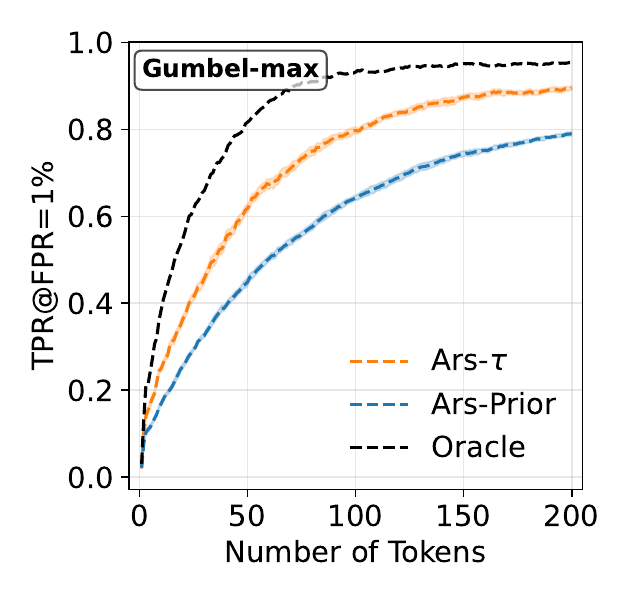}
    \end{minipage}
    \begin{minipage}[b]{0.32\textwidth}
        \centering
        \includegraphics[width=\linewidth]{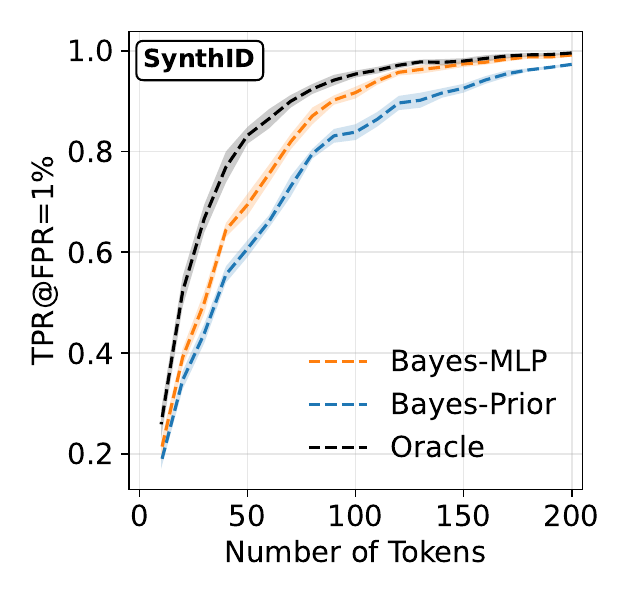}
    \end{minipage}
    \caption{\textbf{Left}: Average Accepted Tokens Per Step (AATPS) of Alg.~\ref{alg:watermarked-spec} applied to the Gumbel-max and SynthID watermarks, compared with Standard Speculative Sampling (Std. SpecSampl). Error bars mark the 95\% confidence intervals. \textbf{Middle} and \textbf{Right}: Watermark detectability (TPR at FPR = 1\%) for Alg.~\ref{alg:watermarked-spec} on the Gumbel-max (middle) and SynthID (right). 
    Orange curves show our method, blue curves show the prior-based method, \new{and black curves represent the ideal detector (Oracle) that always selects the correct test statistic.}
    Shaded regions indicate the 95\% confidence intervals.
    }
    \label{fig: llama results}
\end{figure}

In this section, we show that Alg.~\ref{alg:watermarked-spec} simultaneously improves watermark detectability and attains the highest speculative-sampling efficiency. We evaluate Gumbel-max and SynthID ($m=30$) on the \texttt{ELI5} dataset~\citep{fan2019eli5} for the question-answering task and use two draft–target model pairs: \texttt{Llama-68M} \& \texttt{Llama-7B}~\citep{miao2024specinfer,touvron2023llama} and \texttt{Gemma-2B} \& \texttt{Gemma-7B}~\citep{team2024gemma}. In each pair, the larger model serves as the target and the smaller model as the draft. We report results for the \texttt{Llama} pair in the main text and defer the \texttt{Gemma} results to the Appendix (see Fig.~\ref{fig: gemma results}). We compare our methods, \textbf{Ars-$\tau$} and \textbf{Bayes-MLP}, against the baselines \textbf{Ars-Prior} and \textbf{Bayes-Prior} (see Section~\ref{sec:watermark-detection} for definitions). \new{Besides, we also evaluate Alg.~\ref{alg:watermarked-spec} on the \texttt{C4} dataset~\citep{raffel2020exploring} for the open-ended generation task with the same model settings, and the results are provided in Appendix~\ref{app: additional experiments}.}

Following~\citep{hu2024inevitable}, we measure sampling efficiency by Average Accepted Tokens per Step (AATPS) from Alg.~\ref{alg:watermarked-spec}. From the algorithm, at least one token is generated each step, so AATPS lies in $[1, K+1]$. We report results for $K \in \{2, 3, 4\}$, with larger AATPS indicating higher efficiency. We measure watermark detectability using the true positive rate (TPR) at a fixed false positive rate (FPR) of 1\%. To make the results more pronounced, we use lower temperatures: 0.5 for Gumbel-max and 0.7 for SynthID. Since all detection methods in Section~\ref{sec:watermark-detection} require training data, each experiment generates 2{,}000 watermarked texts, split into 1{,}000 for training and 1{,}000 for testing. For SynthID, for which the null-score distribution lacks a closed form, we additionally sample two disjoint sets of 1{,}000 human-written texts from \texttt{ELI5} as unwatermarked training and test data.

\vspace{-1em}
\paragraph{Sampling efficiency is maintained.}
The left panel of Fig.~\ref{fig: llama results} shows that, for both Gumbel-max and SynthID, Alg.~\ref{alg:watermarked-spec} preserves sampling efficiency: the measured AATPS closely matches the standard speculative-sampling baseline. Exact numbers are provided in the Appendix (see Tab.~\ref{table: AATPS} and~\ref{table: AATPS oeg}).

\vspace{-1em}
\paragraph{Improved detectability.} 
The middle and right panels of Fig.~\ref{fig: llama results} show that, for both watermarks, our method attains higher TPR with fewer tokens, demonstrating that pseudorandom acceptance variables enhance watermark detectability. \new{Also, we present the corresponding ROC curves in Fig.~\ref{figure: roc} and~\ref{figure: roc oeg}, which further corroborate this improvement by providing a more comprehensive characterization of detection performance. Besides, we include an oracle-performance curve representing an ideal detector that always selects the correct test statistic. The results show a gap between our method and this theoretical upper bound—consistent with the analysis in Section~\ref{sec:watermark-detection}—but the gap is not large, and our method approaches the oracle performance at a token length of 200.}

\new{Additionally, we report the Per Token Time (PTT) in milliseconds to evaluate empirical runtime, and the results confirm that Alg.~\ref{alg:watermarked-spec} indeed provides acceleration compared to basic unbiased watermarking methods (without speculative sampling). We also compute the Log Perplexity (LOGPPL) to verify the unbiasedness property of Alg.~\ref{alg:watermarked-spec}; the results show that it preserves the underlying output distribution and therefore does not degrade the language model’s output quality. All results are presented in Tab.~\ref{table: AATPS} and~\ref{table: AATPS oeg}.}

\section{Conclusion}
\label{sec:discuss}

In this work, we revisited the trade-off between watermark strength and speculative sampling efficiency. We introduced a quantitative notion of watermark strength that links directly to statistical detectability, moving beyond prior binary definitions. With this measure, we cast the trade-off as a constrained optimization problem and derived explicit Pareto curves for existing schemes. Building on these insights, we proposed a principled mechanism that injects pseudorandomness into draft-token acceptance. We proved it achieves maximal watermark strength while preserving speculative sampling efficiency, and we empirically verified improved detectability at the same efficiency.



Our findings suggest several practical directions. First, although we focus on standard speculative sampling, the framework points toward potential extensions to variants such as tree-based methods~\citep{miao2024specinfer,cai2024medusa}, which could further accelerate generation. Second, while we consider several common decoder classes, future work can explore broader $(\QM_{\mathrm{draft}}, \QM_{\mathrm{target}})$ choices—for example, using different decoders for the draft and target models. 
\new{Third, our current work directly applies to unbiased degenerate watermarks, but it is an open and interesting direction to investigate how to extend our framework and establish similar improvements to non-degenerate watermarks and even biased ones.
In this way, one might have a larger toolbox to trade off generation quality for stronger detection in the context of speculative sampling.}
Finally, given that human edits can weaken watermark signals~\citep{sadasivan2023can, he2024sefd, li2024robust}, investigating the impact of pseudorandom acceptance on robustness to editing is an open direction for future work.


\section*{Acknowledgments}
This work was supported in part by NIH grants R01AG071174, U01CA274576, R01EB036016, R01EB037101, U01AG066833, R01LM014731, and P30AG073105, NSF grant DMS-2310679, a Meta Faculty Research Award, and Wharton AI for Business. The content is solely the responsibility of the authors and does not necessarily represent the official views of the NIH.

\section*{Ethics Statement}
This work studies the interaction between watermarking and speculative sampling in large language models (LLMs). Our research does not involve human subjects or personally identifiable information, and all experiments are conducted using publicly available models and datasets. The purpose of this work is to improve the transparency, traceability, and efficiency of LLM outputs, which we believe aligns with responsible AI development. We are not aware of any foreseeable negative societal impacts from this research.

\section*{Reproducibility Statement}
We have taken several steps to ensure the reproducibility of our results. All theoretical claims are stated with clear assumptions, and full proofs are provided in Appendix~\ref{app: proof for trade-off} and~\ref{app: proof for maintain both}. The experimental setup, including model pairs, datasets, and evaluation protocols, is described in detail in Section~\ref{sec:watermark-detection} and~\ref{sec:experiment}, with additional details provided in Appendix~\ref{app: simulation}, \ref{app: Bayesian}, and \ref{app: experiment}. To further support reproducibility, we provide anonymized source code for running experiments as supplementary material.

\clearpage
\bibliographystyle{iclr2026_conference}
\bibliography{bib/chatgpt,bib/privacy,bib/stat,bib/sampling}

\appendix

\begin{appendix}
\onecolumn
\section{Related Work}
\label{app: related works}
\paragraph{Speculative sampling.} A major challenge in deploying large language models (LLMs) lies in the high inference latency caused by autoregressive decoding, where tokens are generated one by one. Speculative sampling (also referred to as speculative decoding) has emerged as a powerful strategy to mitigate this bottleneck by adopting a draft-then-verify paradigm~\citep{stern2018blockwise, xia2023speculative, leviathan2023fast, chen2023accelerating}.

Recent advances have expanded this paradigm through diverse improvements. Some works improve drafting strategies, including independent drafters trained via knowledge distillation~\citep{zhou2023distillspec, miao2024specinfer, liu2024online, kim2023speculative}, and self-drafting methods that reuse parts of the target model, such as Medusa~\citep{cai2024medusa} and EAGLE~\citep{li2024eagle}. Others explore verification mechanisms, extending beyond single-candidate check to multi-candidates or tree-structured verification~\citep{yang2024multi, miao2024specinfer, spector2023accelerating, sun2023spectr, he2024rest, cai2024medusa, li2024eagle}. Collectively, these innovations have produced Pareto improvements in both acceptance rate and throughput~\citep{xia2024unlocking}.

Our work builds on this growing body of research but explores a new dimension: how speculative sampling interacts with watermarking. Thus, in this work, we focus on the basic speculative sampling method. Importantly, our approach does not rely on any assumptions about the draft or target models, which means the underlying idea can naturally extend to more advanced variants.

\paragraph{Watermarking techniques.} Recent studies have proposed a diverse set of watermarking schemes~\citep{kirchenbauer2023reliability, fernandez2023three, kuditipudi2023robust, hu2023unbiased, wu2024resilient, zhao2024permute, zhao2024provable, liu2024adaptive, giboulot2024watermax, fu2024gumbelsoft, xie2025debiasing, Dathathri2024, he2025theoreticallygroundedframeworkllm}. Most approaches operate by introducing pseudorandomness into the next-token prediction process, so that the randomness—once seeded—creates statistical patterns that can later be detected to verify the presence of a watermark. \new{Besides, recent works have also studied watermarking from an information-theoretic and optimization-based perspective to design stronger watermark schemes~\citep{tsur2025optimized, tsurheavywater}}. Importantly, a watermarking decoder is called unbiased when its token distribution remains identical to the underlying token distribution~\citep{he2025empirical, li2024statistical}.

Our work focuses on the unbiased watermark and formally quantifies the watermark strength. Building on this foundation, we revisit the trade-off between sampling efficiency and watermark strength identified by~\citet{hu2024inevitable}, and improve this trade-off by extending the one-dimensional pseudorandom seed in traditional watermarks into a multi-dimensional design.

\section{Proof for Section \ref{sec:full-trade-off}}
\label{app: proof for trade-off}
\subsection{Proof Theorem \ref{thm:sample}}
\begin{proof}[Proof of Theorem \ref{thm:sample}]
Under $H_1$, the random variables $Z_1, \dots, Z_n$ are independent, with $\EB_{H_1}[Z_t] = D_t := \EB_{\zeta_t}[\KL(\bP_{t,\zeta_t} \| \bP_t)]$. Let $\Lambda_n := \sum_{t=1}^n Z_t$ and define the empirical average $D_n := \frac{1}{n} \sum_{t=1}^n D_t$. By assumption, $D_n \to \underline{D}$ as $n \to \infty$.
Since the $Z_t$ are independent with bounded moments, the weak law of large numbers gives:
\[
\frac{1}{n} \Lambda_n \xrightarrow{P} \underline{D}, \quad \text{under } H_1.
\]
Let $\Lambda_n^{\text{obs}} := \Lambda_n$ denote the observed log-likelihood ratio under $H_1$, and define the $p$-value as
\[
\text{pval}_n := \PB_{H_0}(\Lambda_n \ge \Lambda_n^{\text{obs}}).
\]

To evaluate this, we apply the non-i.i.d. version of Cramér's theorem (e.g., Section 2.6 in \citep{durrett2013probability}). Define the averaged log moment generating function under $H_0$:
\[
\psi_n(s) := \frac{1}{n} \sum_{t=1}^n \log \EB_{H_0}[e^{s Z_t}],
\]
and the corresponding rate function:
\[
I_n(x) := \sup_{s \in \mathbb{R}} \left( s x - \psi_n(s) \right).
\]

\begin{lem}[Uniform control of log-MGFs]
\label{lem:uniform-logmgf}
Let $\{Z_t\}_{t=1}^n$ be independent random variables such that for some constant $M > 0$, each $Z_t$ satisfies $|Z_t| \le M$ almost surely. Define the log-moment generating functions $\psi_t(s) := \log \EB[e^{s Z_t}]$ and their average $\psi_n(s) := \frac{1}{n} \sum_{t=1}^n \psi_t(s)$. Then:

\begin{enumerate}
    \item For any compact interval $K \subset \mathbb{R}$, we have $\sup_{t \le n,\, s \in K} |\psi_t(s)| < \infty$.
    \item $\psi_n(s)$ converges uniformly on compact intervals to a limiting convex function $\psi(s)$.
    \item The corresponding sequence of convex conjugates $I_n(x) := \sup_{s \in \mathbb{R}} \{s x - \psi_n(s)\}$ converges pointwise to $I(x) := \sup_{s \in \mathbb{R}} \{s x - \psi(s)\}$.
\end{enumerate}
\end{lem}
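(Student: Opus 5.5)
The bounds in parts 1 and 3 come from boundedness of $Z_t$ and convexity, but part 2 needs an extra input that I will have to assume. Part 1 is immediate from $|Z_t|\le M$: for every $s$ we have $e^{-|s|M}\le e^{sZ_t}\le e^{|s|M}$ almost surely, so $|\psi_t(s)|\le |s|M$. Hence on a compact $K\subset[-R,R]$ we get $\sup_{t,\,s\in K}|\psi_t(s)|\le RM$, uniformly in $t$ and $n$.

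For part 2 I would first establish equicontinuity. Each $\psi_t$ is convex and differentiable, with $\psi_t'(s)=\EB[Z_te^{sZ_t}]/\EB[e^{sZ_t}]$, a tilted mean of $Z_t$, so $|\psi_t'(s)|\le M$. Therefore every $\psi_t$, and hence every average $\psi_n$, is $M$-Lipschitz with $\psi_n(0)=0$ and $|\psi_n(s)|\le M|s|$. The family $\{\psi_n\}$ is thus uniformly bounded and equicontinuous on compacts, and Arzelà--Ascoli gives uniform convergence along subsequences.

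Upgrading this to convergence of the full sequence is the main obstacle, and boundedness alone does not suffice: alternating blocks of two different distributions make $\psi_n$ oscillate. My plan is to invoke the standing convergence structure of Theorem~\ref{thm:sample}, namely that the averaged log-MGFs converge pointwise to some $\psi$, in the same way the averaged KL divergences are assumed to converge to $\underline{D}$. I would state this explicitly as the Gärtner--Ellis-type hypothesis implicit in the non-i.i.d. Cramér theorem being used. Given pointwise convergence, the uniform Lipschitz bound upgrades it to uniform convergence on compacts by a standard $\varepsilon/3$ argument over a finite grid. The limit $\psi$ is convex as a pointwise limit of convex functions, and it is also $M$-Lipschitz with $\psi(0)=0$.

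For part 3 I would use continuity of the Legendre--Fenchel transform under this mode of convergence. Uniform convergence on compacts of finite convex functions on $\mathbb{R}$ implies epi-convergence, and by Wijsman's theorem the conjugates then epi-converge. To get pointwise convergence directly, I would split on $x$.

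If $|x|>M$, then $sx-\psi_n(s)\ge |s|(|x|-M)$ for $s$ chosen with the sign of $x$, so $I_n(x)=I(x)=+\infty$.

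If $|x|<M$, the first task is to show that the supremum defining $I_n(x)$ can be restricted to a fixed compact interval independent of $n$. This needs a lower growth bound of the form $\psi_n(s)\ge |s|(|x|+\delta)-C$. I would derive it from convexity together with $\psi_n'(\pm s_0)\to\psi'(\pm s_0)$; this holds because derivatives of convex functions converge wherever the limit is differentiable. Once the domain is compact, uniform convergence gives $I_n(x)\to I(x)$.

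The boundary points $|x|=M$ I would handle by monotone limits, or simply exclude them. This is harmless, since the rate is only evaluated near $\underline{D}$, which lies strictly inside the support.
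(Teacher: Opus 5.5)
\paragraph{Parts 1 and 2.} Your treatment follows the paper's route: the bound $e^{-M|s|}\le e^{sZ_t}\le e^{M|s|}$, then $|\psi_t'|\le M$, equicontinuity and Arzel\`a--Ascoli. Your diagnosis of part 2 is correct. The paper finishes by asserting that the pointwise limit of $\psi_n$ ``exists by the law of large numbers'', but $\psi_n$ is a deterministic sequence, so no law of large numbers applies. Your alternating-block example, with geometrically growing block lengths, shows that part 2 is false as stated. Making pointwise convergence of $\psi_n$ an explicit hypothesis is the right repair. Your $\varepsilon/3$ upgrade and the listed properties of $\psi$ are fine.

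\paragraph{The gap in part 3.} Splitting at $|x|=M$ is the wrong case distinction, because $M$ is only an a priori bound. The relevant thresholds are the asymptotic slopes $a_\pm:=\lim_{s\to\pm\infty}\psi(s)/s\in[-M,M]$.

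Consider the growth bound you need. Its correct one-sided form is $\psi_n(s)\ge sx+\delta|s|-C$; your symmetric form with $|x|$ fails, e.g.\ whenever all $Z_t\ge 0$. Any such bound holding uniformly in large $n$ passes to the limit $\psi$, so it is available only if $x\in(a_-,a_+)$. For such $x$ your derivative argument works: choose differentiability points $s^-<s^+$ of $\psi$ with $\psi'(s^-)<x<\psi'(s^+)$.

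For $x\notin[a_-,a_+]$ you need a separate, easy argument. Here $I(x)=+\infty$, and $\liminf_n I_n(x)\ge\sup_{s\in K}(sx-\psi(s))$ for every compact $K$, so $I_n(x)\to+\infty$.

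At the endpoints $x=a_\pm$, pointwise convergence can genuinely fail even when $|x|<M$. So neither your argument nor a monotone-limit trick can cover them. Take $M=2$, $Z_1\equiv 0$, and $Z_t$ Rademacher for $t\ge 2$. Then $\psi_n(s)=\frac{n-1}{n}\log\cosh s\to\log\cosh s$ uniformly on compacts, so your added hypothesis holds. Yet $I_n(1)=+\infty$ for every $n$, since $\log\cosh s\le s$ for $s\ge 0$, while $I(1)=\log 2$.

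Part 3 therefore holds only for $x\notin\{a_-,a_+\}$, and it should be restated that way (or as epi-convergence). This is also all that the paper's appeal to Rockafellar's epi-convergence of conjugates actually yields. Epi-convergence of convex functions guarantees pointwise convergence only off the boundary of the limit's domain. For Theorem~\ref{thm:sample}, the condition to check is $\underline{D}\in(a_-,a_+)$, not merely $|\underline{D}|<M$ or ``inside the support''.
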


By Lemma~\ref{lem:uniform-logmgf}, the sequence $\{\psi_n(s)\}$ of averaged log-MGFs is uniformly controlled, and the corresponding rate functions $I_n(\cdot)$ converge pointwise to a limiting convex function $I(\cdot)$. Hence, the non-i.i.d.\ version of Cramér’s theorem applies to the sum $\Lambda_n := \sum_{t=1}^n Z_t$, and a large deviation principle holds with rate function $I(\cdot)$.

Since $\Lambda_n^{\text{obs}} = n \underline{D} + o_p(n)$ under $H_1$, the large deviation estimate gives:
\[
\PB_{H_0}(\Lambda_n \ge \Lambda_n^{\text{obs}}) = \exp(-n I_n(\underline{D}) + o(n)).
\]
By Lemma~\ref{lem:uniform-logmgf}, we have $I_n(\underline{D}) \to I(\underline{D})$ as $n \to \infty$. Moreover, because each $Z_t$ is a log-likelihood ratio satisfying $\EB_{H_0}[e^{Z_t}] = 1$, the rate function achieves its maximum at $s = 1$, implying that:
\[
I(\underline{D}) = \underline{D}.
\]
Therefore,
\[
- \frac{1}{n} \log \text{pval}_n \to \underline{D} \quad \text{in probability under } H_1,
\]
which implies that
\[
\text{pval}_n = \exp(-n \underline{D} + o(n)).
\]
To guarantee $\text{pval}_n \le \alpha$, it is necessary that
\[
\exp(-n \underline{D} + o(n)) \le \alpha \quad \Rightarrow \quad 
n \ge \frac{1}{\underline{D}} \log\left( \frac{1}{\alpha} \right)(1 + o(1)).
\]
This completes the proof.

\end{proof}

We conclude by providing the proof of Lemma~\ref{lem:uniform-logmgf} below.
\begin{proof}[Proof of Lemma \ref{lem:uniform-logmgf}]
Since $|Z_t| \le M$ almost surely, for any $s \in \mathbb{R}$ we have
\[
e^{-M |s|} \le e^{s Z_t} \le e^{M |s|},
\]
which implies that the moment generating function $\EB[e^{s Z_t}]$ exists and is bounded by $e^{M |s|}$ for all $t$ and $s \in \mathbb{R}$. In particular, for any compact interval $K \subset \mathbb{R}$, there exists a constant $C_K < \infty$ such that
\[
\sup_{t \le n,\, s \in K} \left| \psi_t(s) \right| = \sup_{t \le n,\, s \in K} \left| \log \EB[e^{s Z_t}] \right| \le C_K.
\]
This proves the first part.

Next, observe that each $\psi_t(s)$ is convex (as the log of an MGF) and differentiable. Moreover, since $|Z_t| \le M$, we have
\[
\left| \frac{d}{ds} \psi_t(s) \right| = \left| \frac{\EB[Z_t e^{s Z_t}]}{\EB[e^{s Z_t}]} \right| \le M,
\]
so each $\psi_t(s)$ is Lipschitz continuous with Lipschitz constant at most $M$ on all of $\mathbb{R}$. Hence, the sequence $\psi_n(s)$ is equicontinuous and uniformly bounded on compact sets. By the Arzelà–Ascoli theorem, the sequence $\psi_n(s)$ has a uniformly convergent subsequence on each compact interval. Since the pointwise limit
\[
\psi(s) := \lim_{n \to \infty} \psi_n(s)
\]
exists by the law of large numbers, we conclude that the convergence is in fact uniform on compacts. This proves the second part.

Finally, since each $\psi_n(s)$ is convex and converges uniformly on compact sets to a convex function $\psi(s)$, the corresponding convex conjugates (rate functions) $I_n(x) := \sup_{s \in \mathbb{R}} \{s x - \psi_n(s)\}$ converge pointwise to $I(x) := \sup_{s \in \mathbb{R}} \{s x - \psi(s)\}$ by standard results from convex analysis (e.g., Rockafellar's theorem on epi-convergence of convex conjugates \citep{rockafellar1997convex}). This proves the last part. 
\end{proof}

\subsection{Proof of Theorem \ref{thm:watermark-strength}}
\begin{proof}[Proof of Theorem \ref{thm:watermark-strength}]
By definition,
\[
\WM(\bP_{\zeta}) =  \EB_{\zeta}[\KL(\bP_{\zeta} \| \bP)]
= \EB_{\zeta} \sum_{\token} P_{\token, \zeta} \log \frac{P_{\token, \zeta}}{P_{\token}}
= \Ent(\bP) - \EB_{\zeta}[\Ent(\bP_{\zeta})] \le \Ent(\bP).
\]
When the equality holds, we must have $\EB_{\zeta}[\Ent(\bP_{\zeta})] =0$ so that $\Ent(\bP_{\zeta})=0$ for any $\zeta$.
\end{proof}

\subsection{Proof of Theorem \ref{thm:examples}}

\begin{proof}[Proof of Theorem \ref{thm:examples}]
The decoder for the Gumbel-max watermark is deterministic and always produces a degenerate distribution. Therefore, it trivially achieves the maximum watermark strength.

We now prove the result for the SynthID watermark. Recall that the $m$-layer decoder is defined in \eqref{eq:synthid}:
\begin{equation}
\tag{\ref{eq:synthid}}
\SM^{\mathrm{syn}}(\bP, \zeta) = \TM_{\bg_m} \circ \cdots \circ \TM_{\bg_1} (\bP),
\end{equation}
where each $\TM_{\bg}$ is a vectorized operator defined by
\begin{equation}
\tag{\ref{eq:synthid-operator}}
(\TM_{\bg}(\bP))(\token) = P_{\token} \cdot \left(1 + g_{\token} - \sum_{\token': g_{\token'} = 1} P_{\token'} \right).
\end{equation}
A direct calculation shows that this transformation preserves expectation:
\begin{equation}
    \label{eq:unbiased-operator}
    \EB_{\bg}[\TM_{\bg}(\bP)] = \bP.
\end{equation}
Let us define
\[
\hat{\bP}_t := \TM_{\bg_t} \circ \cdots \circ \TM_{\bg_1}(\bP),
\]
and let $\FM_t := \sigma(\{\bg_l\}_{l=1}^t)$ be the sigma-field generated by all pseudorandom masks up to layer $t$.
By construction and the unbiasedness in \eqref{eq:unbiased-operator}, we have
\[
\EB[\hat{\bP}_t \mid \FM_{t-1}] = \hat{\bP}_{t-1}.
\]
This shows that the sequence $\{\hat{\bP}_t\}$ forms a non-negative, vector-valued martingale adapted to the filtration $\{\FM_t\}$. Moreover, each $\hat{\bP}_t$ is a valid categorical distribution.

By the martingale convergence theorem (e.g., \citep[Section~5.2]{durrett2013probability}), the sequence $\hat{\bP}_t$ converges almost surely to a limiting distribution, which we denote by $\hat{\bP}$.
We assert that $\hat{\bP}$ must be a fixed point of the operator $\TM_{\bg}$ for every possible value of $\bg$ due to the above almost sure convergence. By the definition in \eqref{eq:synthid-operator}, this is only possible if $\hat{\bP}$ assigns all mass to a single token—that is, $\hat{\bP}$ is degenerate. If this were not the case, then applying $\TM_{\bg}$ would change the distribution for some choices of $\bg$.
Therefore, the SynthID decoder also converges to a degenerate distribution and achieves maximum watermark strength. This completes the proof.
\end{proof}

\subsection{Proof of Lemma \ref{lem:speculative-sampling-optimal}}
\begin{proof}[Proof of Lemma \ref{lem:speculative-sampling-optimal}]
We first prove the first part.
For any fixed $\zeta$, we can write
\[
P_{\zeta,\token'} = \sum_{\token \in \Voca} \AM_\zeta(\token' \mid \token) Q_{\zeta,\token}.
\]
In particular, this implies
\begin{align*}
    P_{\zeta,\token'} &\geq \AM_\zeta(\token' \mid \token') Q_{\zeta,\token'} \\
    \Rightarrow \AM_\zeta(\token' \mid \token') &\leq \frac{P_{\zeta,\token'}}{Q_{\zeta,\token'}} \\
    \Rightarrow \AM_\zeta(\token' \mid \token') Q_{\zeta,\token'} &\leq Q_{\zeta,\token'} \cdot \min\left(1, \frac{P_{\zeta,\token'}}{Q_{\zeta,\token'}}\right).
\end{align*}
Summing over $\token' \in \Voca$, we obtain
\begin{align*}
    \sum_{\token' \in \Voca} \AM_\zeta(\token' \mid \token') Q_{\zeta,\token'} 
    &\leq \sum_{\token' \in \Voca} Q_{\zeta,\token'} \cdot \min\left(1, \frac{P_{\zeta,\token'}}{Q_{\zeta,\token'}}\right) 
    = \sum_{\token \in \Voca} \min\left(Q_{\zeta,\token}, P_{\zeta,\token}\right).
\end{align*}
Therefore, the sampling efficiency satisfies
\[
\SE(\bQ_{\zeta}, \AM_{\zeta}) \leq \EB_\zeta\left[\sum_{\token \in \Voca} \min\left(Q_{\zeta,\token}, P_{\zeta,\token}\right)\right].
\]
Finally, note that equality holds when $\AM_\zeta = \Spec(\bQ_\zeta, \bP_\zeta)$, completing the proof of the first part.

For the second part, we note that the total variation distance satisfies the following characterization:  
\[
\TV(\bQ, \bP) = \inf \{\PB(X \neq Y) : X \sim \bQ, Y \sim \bP \},
\]
where the infimum is taken over all couplings $(X, Y)$ such that their marginal distributions are $\bQ$ and $\bP$, respectively.  
Let $\AM_{\zeta} = \Spec(\bQ_\zeta, \bP_{\zeta})$.
Since $(X, Y) \sim (\bQ_{\zeta}, \AM_{\zeta} \circ \bQ_{\zeta})$ forms a valid coupling of $\bQ$ and $\bP$,  
\begin{align*}
\TV(\bQ, \bP)
&\le \PB_{\zeta}(X \neq Y; (X, Y) \sim (\bQ_{\zeta}, \AM_{\zeta} \circ \bQ_{\zeta}))\\
&= 1 - \EB_{\zeta}[\sum_{\token \in \Voca}\AM_{\zeta}(\token|\token) Q_{\zeta, \token}) ]\\
&= 1 - \SE(\bQ_{\zeta}, \Spec(\bQ_\zeta, \bP_{\zeta})) \\
&= 1 - \SSE(\bQ_{\zeta}, \bP_{\zeta}).
\end{align*}
\end{proof}

\section{Examples of trade-off curves}
\label{app: simulation}
\subsection{Simulation Setup}
\label{app: simulation setup}
To get the numerical result of the trade-off curves,we manually specify 10-dimensional token distributions for the draft and target models:
\begin{align*}
    \bQ &= (0.4,0.10,0.12,0.11,0.08,0.06,0.05,0.035,0.025,0.02)\\
    \bP &= (0.1,0.13,0.155,0.115,0.235,0.065,0.055,0.05,0.06,0.035)
\end{align*}
These distributions mimic a common pattern observed in practice: the draft model $\bQ$ concentrates more probability mass on a single token, whereas the target model $\bP$ exhibits higher entropy. Although the actual vocabulary size in LLMs is far larger than 10, in practice most of the probability mass is concentrated on a small set of high-probability tokens. This is consistent with the intuition behind top-$k$ sampling, where only a handful of tokens dominate the distribution. Thus, while simplified, our simulation setting captures the essential structure of real-world scenarios.

To approximate expectations without a simple closed-form expression (e.g., sampling efficiency), we employ Monte Carlo estimation using $10^7$ pseudorandom seeds and report the resulting empirical mean.

\paragraph{Implementation of SynthID ($m=\infty$).} According to Thm.~\ref{thm:examples}, the SynthID decoder converges almost surely to a degenerate distribution as the tournament rounds $m \to \infty$. In practice, however, we cannot really set $m$ to infinity. Empirically, we observe that at $m=30$, the distribution is already highly concentrated on a single token, though not yet fully degenerate. By the convergence guarantee, the remaining probability mass will collapse onto this token as $m$ increases further. Thus, in implementation, we approximate the limit distribution by constructing a one-hot vector for that token.

\subsection{Other Watermarked classes}
\label{app: watermarked class}
In Section~\ref{sec:trade-off}, we used the linearly watermarked classes \eqref{trade-off-linear} as a simple example to illustrate trade-off curves. More generally, different choices of the watermarked classes $\QM_{\mathrm{draft}}$ and $\QM_{\mathrm{target}}$ yield different curves. These choices can be highly customized, highlighting the flexibility and scalability of our framework.

Here, we detail the classes used in the right part of Fig.~\ref{fig:trade-off-line}. We begin with the transition kernel from \citep{Dathathri2024}:
\begin{equation}
\label{eq:A_synthid}
\AM_{\xi}(\token|\token')=
\begin{cases}
\min\left(1, \frac{P_{\token}}{Q_{\token}}\right), & \text{if } \token' = \token, \\
\SM\bigl((\bP-\bQ)_+, \xi\bigr)(\token) \cdot \left(1 - \frac{P{\token'}}{Q_{\token'}}\right)_+, & \text{if } \token' \neq \token,
\end{cases}
\end{equation}
where $(\bP-\bQ)_{+}$ denotes the normalized excess mass of $\bP$ over $\bQ$ and $\SM$ is an unbiased decoder. We denote this kernel as $\AM_{\xi}(\bQ, \bP)$. Importantly, given the independence of $\zeta$ and $\xi$, we have $\mathbb{E}_{\zeta, \xi}[\AM_\xi(\bQ, \bP) \circ\bQ_{\zeta}] = \bP$ \citep{Dathathri2024}. By Remark~\ref{remark: trade-off}, this transformation can be treated as an unbiased decoder, denoted $\SM_\mathrm{google}(\bP, \zeta, \xi) := \AM_{\xi}(\bQ, \bP) \circ\bQ_{\zeta}$. We then define \textbf{Google's class}~\citep{Dathathri2024} as:
\[
\QM_{\mathrm{draft}}
= \{\SM_{\mathrm{draft}}\}, 
\quad
\QM_{\mathrm{target}}
= \{(1-\gamma)\SM_\mathrm{google} + \gamma\,\SM_{\mathrm{target}} : \gamma \in [0,1]\}.
\]
Similarly, we denote $\SM_\mathrm{hu}(\bP, \zeta) := \Spec(\bQ, \bP) \circ\bQ_{\zeta}$ and define \textbf{Hu's class}~\citep{hu2024inevitable} as:
\[
\QM_{\mathrm{draft}}
= \{\SM_{\mathrm{draft}}\}, 
\quad
\QM_{\mathrm{target}}
= \{(1-\gamma)\SM_\mathrm{hu} + \gamma\,\SM_{\mathrm{target}} : \gamma \in [0,1]\}.
\]
In both cases, $\SM_{\mathrm{draft}}$ and $\SM_{\mathrm{target}}$ denote prescribed unbiased decoders (e.g., $\SM^\mathrm{gum}$ or $\SM^\mathrm{syn}$ in our experiments). In the above definition, the draft decoder is fixed, and the target decoder is controlled solely by the variable $\gamma$. Hence, by simply iterating over $\gamma$ and applying Monte Carlo estimation, we can compute the sampling efficiency and watermark strength for each value and plot the corresponding trade-off curve.

\section{Proof of Theorem~\ref{thm:unbiasd and maintain both}}
\label{app: proof for maintain both}
\begin{proof}[Proof of Theorem \ref{thm:unbiasd and maintain both}]

We first show (a). For Alg.~\ref{alg:watermarked-spec}, tokens can be generated in two cases. \textbf{Case 1}: The token is sampled from the accept and reject loop on lines~\ref{alg:accept-loop-begin} to \ref{alg:accept-loop-end}. By definition, 
\begin{equation}
\label{expression of P'}
\begin{aligned}
\bP_{\zeta}'(\token) 
&= \bQ_{\zeta^D}(\token)\,
   \mathbf{1}_{G(\zeta^R) < \min\{1, \tfrac{\bP(\token)}{\bQ(\token)}\}} \\[4pt]
&\quad + \left(1 - \sum_{\token\in\Voca} \bQ_{\zeta^D}(\token)\,
   \mathbf{1}_{G(\zeta^R) < \min\{1, \tfrac{\bP(\token)}{\bQ(\token)}\}}\right)
   (\bP-\bQ)_{+,\zeta^T}(\token).
\end{aligned}
\end{equation}
The first term corresponds to the probability of sampling $\token$ from the draft model and accepting it. The second term corresponds to the probability of rejecting any token from the draft model, then sampling $\token$ from the residual distribution. Since the watermark decoder $\SM$ is unbiased, we have,
\[
\EB[\bQ_{\zeta^D}]=\bQ, \EB[\bP_{\zeta^T}]=\bP, \text{and}\;\EB[(\bP-\bQ)_{+, \zeta^T}] = (\bP-\bQ)_{+}.
\]
By the definition of $G(\zeta^R)$, we also have,
\[
\EB[\mathbf{1}_{G(\zeta^R) <\min\{1, \frac{\bP(\token)}{\bQ(\token)}\}}] = \min\left\{1, \frac{\bP(\token)}{\bQ(\token)}\right\}.
\]
Since $\zeta^D, \zeta^T,$ and $\zeta^R$ are independent, it then follows that,
\[
\begin{aligned}
\EB_{\zeta = (\zeta^D, \zeta^T, \zeta^R)}[\bP_{\zeta}'(\token)]
&= \bQ(\token)\,\min\!\left\{1, \frac{\bP(\token)}{\bQ(\token)}\right\} \\[4pt]
&\quad + \left(1 - \sum_{\token\in\Voca} \bQ(\token)\,
     \min\!\left\{1, \tfrac{\bP(\token)}{\bQ(\token)}\right\}\right)
     (\bP - \bQ)_{+}(\token).
\end{aligned}
\]
This expression is equal to the probability distribution of the next token generated by standard speculative sampling, and \citet{chen2023accelerating} has shown it is equal to the target distribution $\bP(\token)$.
\textbf{Case 2}: The token is sampled from $\bP_{\zeta^T}$ (e.g., the bonus step on line~\ref{alg:bonus}). Since $\SM$ is unbiased, we also have $\EB[\bP_{\zeta^T}] = \bP$. Thus we prove the unbiasedness.

We then show (b). From Alg.~\ref{alg:watermarked-spec}, we find that the self-transition probability is $\AM_{\zeta}(w\mid w) = \1_{{G(\zeta^R)<\min\{1,\bP(w)/\bQ(w)}\}}$.
Hence, by Def.~\ref{def:efficiency},
\begin{align*}
    \SE(\bQ_{\zeta^D}, \AM_{\zeta}) &= \mathbb{E}_{\zeta = (\zeta^D, \zeta^T, \zeta^R)}[\sum_{\token\in\Voca}\AM_{\zeta}(\token|\token)\bQ_{\zeta^D}(\token)] \\
    &= \mathbb{E}_{\zeta = (\zeta^D, \zeta^T, \zeta^R)}[\sum_{\token\in\Voca}\bQ_{\zeta^D}(\token)\cdot\mathbf{1}_{G(\zeta^R) <\min\{1, \bP(\token)/\bQ(\token)\}}] \\
    &= \sum_{\token\in\Voca}\mathbb{E}_{\zeta^D}[\bQ_{\zeta^D}]\cdot\mathbb{E}_{\zeta^R}[\mathbf{1}_{G(\zeta^R) <\min\left\{1, \bP(\token)/\bQ(\token)\right\}}]\\
    &= \sum_{\token\in\Voca}\bQ(\token)\cdot \min\left\{1, \frac{\bP(\token)}{\bQ(\token)}\right\} = 1 - \TV(\bQ,\bP),
\end{align*}
where independence of $\zeta^D$ and $\zeta^R$ is used in the third equality. Thus, Alg.~\ref{alg:watermarked-spec} can preserve the sampling efficiency.

Finally, we show (c). Recall from the proof of (a) that there are two cases. \textbf{Case 1}: The expression of $\bP_\zeta'$ is given by \ref{expression of P'}. Since $\SM$ achieves the largest watermark strength, we have $\bQ_{\zeta^D} = \SM(\bQ, \zeta^D)$ as a degenerate distribution. It then follows that
\[
\bP'_{\zeta}(\token) = \bQ_{\zeta^D}(\token)\cdot\mathbf{1}_{G(\zeta^R) <\min\{1, \frac{\bP(\token')}{\bQ(\token')}\}} + (\bP-\bQ)_{+, \zeta^T}(\token)\cdot\mathbf{1}_{G(\zeta^R) \geq\min\{1, \frac{\bP(\token')}{\bQ(\token')}\}},
\]
where $\token'$ is the token sampled by $\bQ_{\zeta^D}$. Moreover, $(\bP-\bQ){+, \zeta^T}$ is also degenerate, which implies that $\bP'\zeta$ is always degenerate and that $\WM(\bP'_{\zeta}) = \Ent(\bP)$.
\textbf{Case 2}: $\bP_{\zeta^T}$ is itself degenerate.

Therefore, in both cases, the final distribution produced by Alg.~\ref{alg:watermarked-spec} is always degenerate, and hence the watermark strength is preserved.
\end{proof}

\section{Bayesian Scoring functions for SynthID}
\label{app: Bayesian}
Here we introduce the details of the Bayesian scoring function mentioned in Section~\ref{sec:watermark-detection}.
For a given text, we have two hypotheses: unwatermarked ($H_0$) and watermarked ($H_1$). Our target is to estimate the posterior $\mathbb{P}(H_1|y^D, y^T, u)$, which is the probability that the text is watermarked given its $g$-values and acceptance variable. Formally,
\begin{equation}
\label{posterior likelihood}
\begin{aligned}
\mathbb{P}(H_1 \mid y^D, y^T, u)
&= \; \sigma \Bigl(
    \log \sP(y^D, y^T \mid H_1, u) - \log \sP(y^D, y^T \mid H_0, u) \\
&\qquad\; + \log \sP(H_1 \mid u) - \log \sP(H_0 \mid u)
  \Bigr)
\end{aligned}
\end{equation}
where $\sigma(\cdot)$ is the sigmoid function. There are two terms that need to be estimated in equation \ref{posterior likelihood}. First, the prior $\sP(H_1\mid u)$, which can be learned empirically. Actually, $u$ does not influence the existence of the watermark, so $\sP(H_1\mid u) = \sP (H_1)$ (also $\sP(H_0\mid u) = \sP (H_0)$), which is the prior probability that a text is watermarked (or not). Following the setting in~\citet{Dathathri2024}, we set this prior to 0.5 in our experiments.
Second, the likelihood $\sP(y^D, y^T \mid H_1, u)$ and $\sP(y^D, y^T\mid H_0, u)$. To illustrate, we fix step $t$ and tournament layer $l$. Due to the independence across tokens and layers, once the odds are determined for a fixed step and layer, they can be multiplied to obtain the odds for the entire sequence. Recall that for step $t$, we denote $y_t^D = (\bg_{t,1}^D(w_t), \ldots, \bg_{t,m}^D(w_t)) \in \RB^m$ with a corresponding definition for $y_t^T$. For notational simplicity, we write $g_{t,l}^D = \bg_{t,1}^D(w_t)$. The unwatermarked likelihood is then given by
\[
    \sP(g_{t,l}^D, g_{t,l}^T\mid H_0, u_t) = f_g(g_{t,l}^D)f_g(g_{t,l}^T),
\]
where $f_g$ denotes the probability mass function of the $g$-value. In our setting, $g$-values follow $\text{Bernoulli}(0.5)$, so $f_g = 0.5$. For the watermarked likelihoods, we have:
\begin{equation}
\label{watermarklikelihood}
\begin{aligned}
\sP(g_{t,l}^D, g_{t,l}^T \mid H_1, u_t)  
&= \sum_{\kappa \in \{D,T\}}
   \sP(g_{t,l}^D, g_{t,l}^T \mid \zeta_t=\zeta_t^\kappa, H_1, u_t)\,
   \sP(\zeta_t=\zeta_t^\kappa \mid H_1, u_t) \\[6pt]
&= \sP(g_{t,l}^D \mid \zeta_t=\zeta_t^D)\,
   f_g(g_{t,l}^T)\,
   \sP(\zeta_t=\zeta_t^D \mid u_t) \\[6pt]
&\quad + \sP(g_{t,l}^T \mid \zeta_t=\zeta_t^T)\,
   f_g(g_{t,l}^D)\,
   \Bigl(1 - \sP(\zeta_t=\zeta_t^D \mid u_t)\Bigr),
\end{aligned}
\end{equation}
where $\zeta_t=\zeta_t^D$ indicates that the token at step $t$ was watermarked with pseudorandom seed $\zeta^D$. Two terms in equation \ref{watermarklikelihood} require estimation. First, $\sP(\zeta_t = \zeta_t^D\mid u_t)$, which means given the acceptance variable, the probability that the token comes from the draft. Second, $\sP(g_{t,l}^\kappa\mid \zeta_t = \zeta_t^\kappa)$, which is the likelihood of the g-values under a specific pseudorandom seed. Following \citet{Dathathri2024}, when SynthID employs two-sample tournament sampling, we can factorize $\sP(g_{t,l}^\kappa \mid \zeta_t = \zeta_t^\kappa)$ and then have,
\[
\begin{aligned}
\sP(g_{t,l}^D, g_{t,l}^T \mid H_1, u_t) 
&= \sP(g_{t,l}^D \mid \zeta_t=\zeta_t^D)\,
   f_g(g_{t,l}^T)\,
   \sP(\zeta_t=\zeta_t^D \mid u_t) \\[4pt]
&\quad + \sP(g_{t,l}^T \mid \zeta_t=\zeta_t^T)\,
   f_g(g_{t,l}^D)\,
   \bigl(1 - \sP(\zeta_t=\zeta_t^D \mid u_t)\bigr) \\[8pt]
&= \tfrac{1}{4}\Bigl[(g^D_{t,l}-\tfrac{1}{2})\,
   \sP(\psi^D_{t,l}=2 \mid g^D_{t,<l}) + 1\Bigr]\,
   \sP(\zeta_t=\zeta_t^D \mid u_t) \\[4pt]
&\quad + \tfrac{1}{4}\Bigl[(g^T_{t,l}-\tfrac{1}{2})\,
   \sP(\psi^T_{t,l}=2 \mid g^T_{t,<l}) + 1\Bigr]\,
   \bigl(1 - \sP(\zeta_t=\zeta_t^D \mid u_t)\bigr),
\end{aligned}
\]
where $\psi_{t,l}$ is a random variable denoting the number of unique tokens appearing in the tournament match at layer $l$ and step $t$.
We model $\sP(\psi^\kappa_{t,l}=2 \mid g^\kappa_{t,<l})$ using logistic regression:
\[
\sP\left(\psi^\kappa_{t, l}=2 \mid g^\kappa_{t,<l}\right)=\sigma\bigl(\beta^\kappa_{l}+\sum_{j=1}^{l-1} \delta^\kappa_{l, j} g^\kappa_{t, j}\bigr),
\]
where $\sigma(\cdot)$ is the sigmoid function. Here, $\beta^\kappa_l \in \RB$ is the bias term for layer $l$, and $\delta^\kappa_{l,j} \in \RB$ represents the influence of $g^\kappa_{t,j}$ on the probability that $\psi^\kappa_{t,l}=2$.
The remaining term to estimate is $\sP(\zeta_t = \zeta_t^D \mid u_t)$. Without access to the acceptance variable, \citet{Dathathri2024} treats this probability as a prior, estimated directly from the acceptance rate of speculative sampling; we refer to this approach as \textbf{Bayes-Prior}. In our method, we leverage the acceptance variable and train a three-layer MLP to perform the estimation:
\[
\label{eq:acceptnet}
\begin{aligned}
\sP(\zeta_t = \zeta_t^D\mid u_t)
&=
\begin{cases}
\sigma\bigl(\alpha (\tau_t - u_t)\bigr), & \text{for\;training}, \\[6pt]
\mathbf{1}_{\,u_t \le \tau_t}, & \text{for\;inference},
\end{cases} \\
\end{aligned}
\]
where $\sigma(\cdot)$ is the sigmoid function, $\alpha$ is a scaling parameter, and $\tau_t = \operatorname{MLP}(x_t)$ with input $x_t = [\,g_{t,1}^D, \ldots, g_{t,30}^D, g_{t,1}^T, \ldots, g_{t,30}^T\,] \in \RB^{60}$. We denote our method as \textbf{Bayes-MLP}.

In summary, \textbf{Bayes-Prior} relies solely on the $g$-values for detection:
\[
\textbf{Bayes-Prior}(y^D, y^T) = \sP(H_1\mid y^D, y^T),
\]
whereas \textbf{Bayes-MLP} additionally incorporates the pseudorandom variable $u$:
\[
\textbf{Bayes-MLP}(y^D, y^T, u) = \sP(H_1\mid y^D, y^T, u),
\]
with $y^D, y^T \in \mathbb{R}^{m \times N}$ and $u \in \mathbb{R}^N$, where $N$ denotes the token sequence length.

\section{Experiment Details and Results}
\label{app: experiment}
\subsection{Implementation Details}
\paragraph{Implementation of Ars-$\tau$.} In Section~\ref{sec:watermark-detection}, we introduced the use of the pseudorandom variable $u$ and proposed \textbf{Ars}-$\tau$ for the Gumbel-max watermark. Recall that the selection rule for \textbf{Ars}-$\tau$ is defined as
\[
y_t = y_t^D\mathbf{1}_{G(\zeta_t^R)<\tau} + y_t^T\mathbf{1}_{G(\zeta_t^R)\geq\tau}.
\]
where $\tau \in [0,1]$. To determine the optimal $\tau$, we adopt a straightforward approach: on the training set, we evaluate 100 evenly spaced values of $\tau$ over $[0,1]$ and select the one that performs best, which is then applied to the test set.
\paragraph{Implementation of \textbf{Bayes-MLP}.} Details are provided in Appendix~\ref{app: Bayesian}.

\subsection{Additional Experimental Results}
\label{app: additional experiments}
\begin{figure}[h]
    \centering
    \begin{minipage}[b]{0.32\textwidth}
        \centering
        \includegraphics[width=\linewidth]{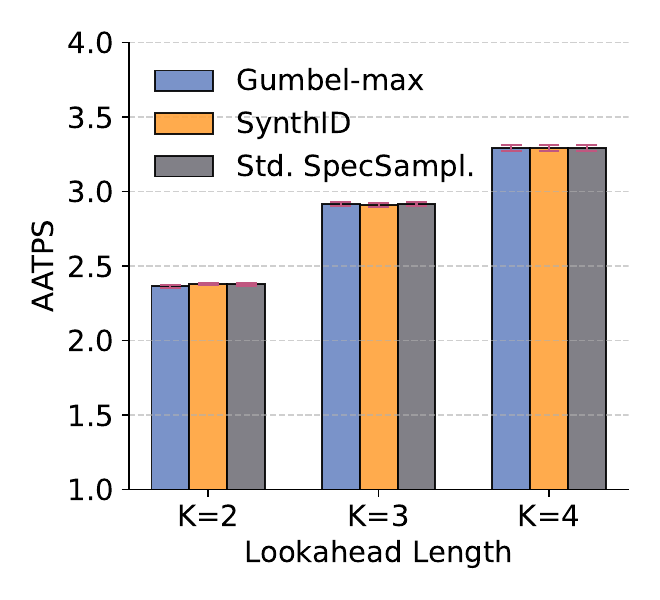}
    \end{minipage}
    \begin{minipage}[b]{0.32\textwidth}
        \centering
        \includegraphics[width=\linewidth]{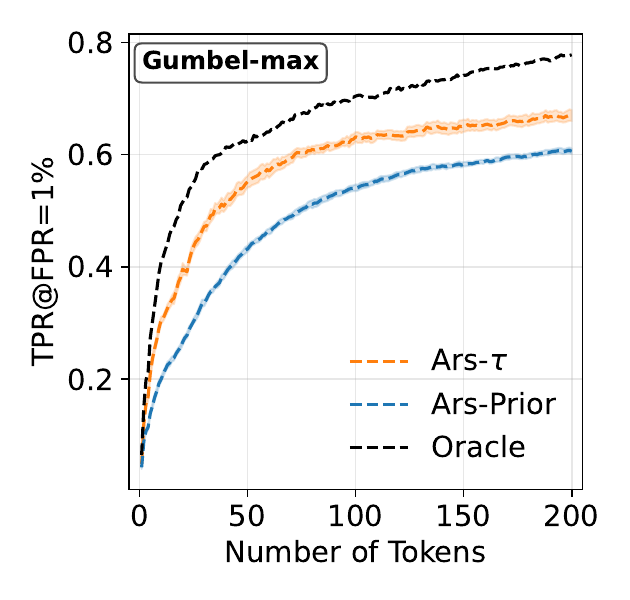}
    \end{minipage}
    \begin{minipage}[b]{0.32\textwidth}
        \centering
        \includegraphics[width=\linewidth]{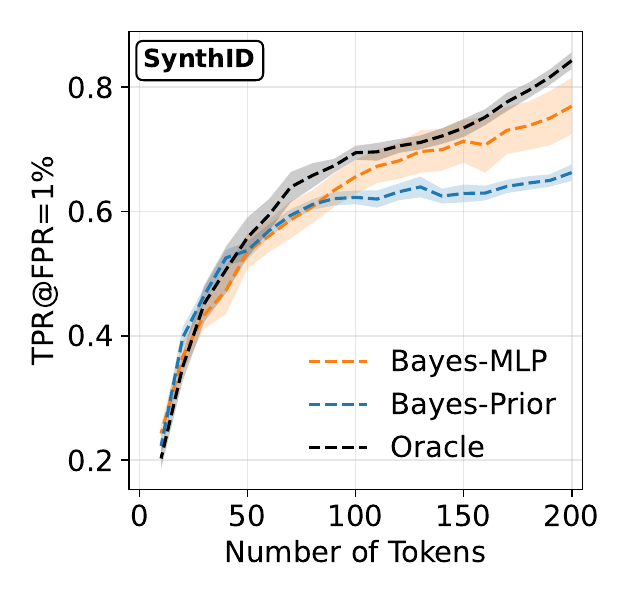}
    \end{minipage}
    \caption{Experimental results for \texttt{Gemma} models on the \texttt{ELI5} dataset. \textbf{Left}: Average Accepted Tokens Per Step (AATPS) of Alg.~\ref{alg:watermarked-spec} applied to the Gumbel-max and SynthID watermarks, compared with Standard Speculative Sampling (Std. SpecSampl). Error bars mark the 95\% confidence intervals. \textbf{Middle} and \textbf{Right}: Watermark detectability (TPR at FPR = 1\%) for Alg.~\ref{alg:watermarked-spec} on the Gumbel-max (middle) and SynthID (right). 
    Orange curves show our method, blue curves show the prior-based method, \new{and black curves represent the ideal detector (Oracle) that always selects the correct test statistic}.
    Shaded regions indicate the 95\% confidence intervals.}
    \label{fig: gemma results}
\end{figure}

\begin{table*}[h]
\caption{Results of Alg.~\ref{alg:watermarked-spec} applied to the Gumbel-max and SynthID watermarks on the \texttt{ELI5} dataset, compared with Standard Speculative Sampling (Std. SpecSampl.) and basic watermarks. \textbf{AATPS}: Average Accepted Tokens Per Step; \new{\textbf{PTT}: Per Token Time in millisecond; \textbf{LOGPPL}:  Log Perplexity}.
}
\label{table: AATPS}
\centering
{\footnotesize   
\begin{tabular}{cclccc}
\toprule
\multicolumn{1}{c}{\textbf{Models}} & \multicolumn{1}{c}{\textbf{Lookahead}}  & \multicolumn{1}{l}{\textbf{Method}} & \multicolumn{1}{c}{\textbf{AATPS}} & \multicolumn{1}{c}{\textbf{PTT}} & \multicolumn{1}{c}{\textbf{LOGPPL}}\\
\midrule
\multirow{11}{*}{\rotatebox{90}{\texttt{Llama-7b} \textbf{/} \texttt{Llama-68m}}}
& \multirow{2}{*}{\begin{tabular}[c]{@{}l@{}}{\new{basic}}\end{tabular}} & \new{Gumbel-max} & \new{$1.0 \pm 0.0$}  & \new{$22.09 \pm 0.151$} & \new{$2.08 \pm 0.024$}\\
& & \new{SynthID} & \new{$1.0 \pm 0.0$}  & \new{$44.94 \pm 0.477$} & \new{$2.10 \pm 0.017$}\\
\cmidrule{2-6}
& \multirow{3}{*}{\begin{tabular}[c]{@{}l@{}}{$K = 2$}\end{tabular}} & Gumbel-max & $1.7707 \pm 0.0058$ & \new{$17.04 \pm 0.121$} & \new{$2.16 \pm 0.025$}\\
& & SynthID & $1.7645 \pm 0.0042$ & \new{$37.15 \pm 0.419$} & \new{$2.12 \pm 0.014$}\\
& & Std. SpecSampl. & $1.7666 \pm 0.0099$ & \new{$14.98 \pm 0.167$} & \new{$2.18 \pm 0.023$}\\
\cmidrule{2-6}
& \multirow{3}{*}{\begin{tabular}[c]{@{}l@{}}{$K = 3$}\end{tabular}} & Gumbel-max & $1.9650 \pm 0.0082$ & \new{$17.00 \pm 0.131$} & \new{$2.14 \pm 0.024$}\\
& & SynthID & $1.9584 \pm 0.0066$ & \new{$40.75 \pm 0.554$} & \new{$2.13 \pm 0.015$} \\
& & Std. SpecSampl. & $1.9577 \pm 0.0070$ & \new{$15.77 \pm 0.059$} & \new{$2.07 \pm 0.022$} \\
\cmidrule{2-6}
& \multirow{3}{*}{\begin{tabular}[c]{@{}l@{}}{$K = 4$}\end{tabular}} & Gumbel-max & $2.0987 \pm 0.0095$ & \new{$17.96 \pm 0.141$} & \new{$2.16 \pm 0.024$}\\
& & SynthID & $2.0927 \pm 0.0078$ & \new{$41.74 \pm 0.658$} & \new{$2.15 \pm 0.014$}\\
& & Std. SpecSampl. & $2.0988 \pm 0.0094$ & \new{$15.56 \pm 0.074$} & \new{$2.19 \pm 0.022$}\\
\cmidrule{1-6}
\multirow{11}{*}{\rotatebox{90}{\texttt{Gemma-7b} \textbf{/} \texttt{Gemma-2b}}}
& \multirow{2}{*}{\begin{tabular}[c]{@{}l@{}}\new{basic}\end{tabular}} & \new{Gumbel-max} & \new{$1.0 \pm 0.0$}  & \new{$29.20 \pm 0.072$} & \new{$1.79 \pm 0.053$}\\
& & \new{SynthID} & \new{$1.0 \pm 0.0$} & \new{$41.32 \pm 0.179$} & \new{$1.69 \pm 0.037$}\\
\cmidrule{2-6}
& \multirow{3}{*}{\begin{tabular}[c]{@{}l@{}}{$K = 2$}\end{tabular}} & Gumbel-max & $2.3637 \pm 0.0085$  & \new{$25.98 \pm 0.175$} & \new{$1.69 \pm 0.055$}\\
& & SynthID & $2.3794 \pm 0.0089$ &\new{$38.10 \pm 0.522$} &\new{$1.74 \pm 0.036$} \\
& & Std. SpecSampl. & $2.3773 \pm 0.0080$ & \new{$22.74 \pm 0.082$} & \new{$1.66 \pm 0.057$}\\
\cmidrule{2-6}
& \multirow{3}{*}{\begin{tabular}[c]{@{}l@{}}{$K = 3$}\end{tabular}} & Gumbel-max & $2.9146 \pm 0.0144$ & \new{$26.83 \pm 0.204$} & \new{$1.75 \pm 0.053$}\\
& & SynthID & $2.9108 \pm 0.0142$ & \new{$35.62 \pm 0.291$} & \new{$1.75 \pm 0.035$}\\
& & Std. SpecSampl. & $2.9140 \pm 0.0140$ & \new{$23.84 \pm 0.121$}& \new{$1.69 \pm 0.051$}\\
\cmidrule{2-6}
& \multirow{3}{*}{\begin{tabular}[c]{@{}l@{}}{$K = 4$}\end{tabular}} & Gumbel-max & $3.2923 \pm 0.0203$  & \new{$28.94 \pm 0.245$} & \new{$1.75 \pm 0.052$}\\
& & SynthID & $3.2920 \pm 0.0190$ & \new{$41.06 \pm 0.444$} & \new{$1.73 \pm 0.037$}\\
& & Std. SpecSampl. & $3.2930 \pm 0.0213$ & \new{$26.05 \pm 0.179$} & \new{$1.72 \pm 0.053$}\\
\bottomrule
\end{tabular}}
\vspace{-10pt}
\end{table*}

\begin{figure}[t]
    \centering

    \begin{minipage}{0.42\textwidth}
        \centering
        \includegraphics[width=\linewidth]{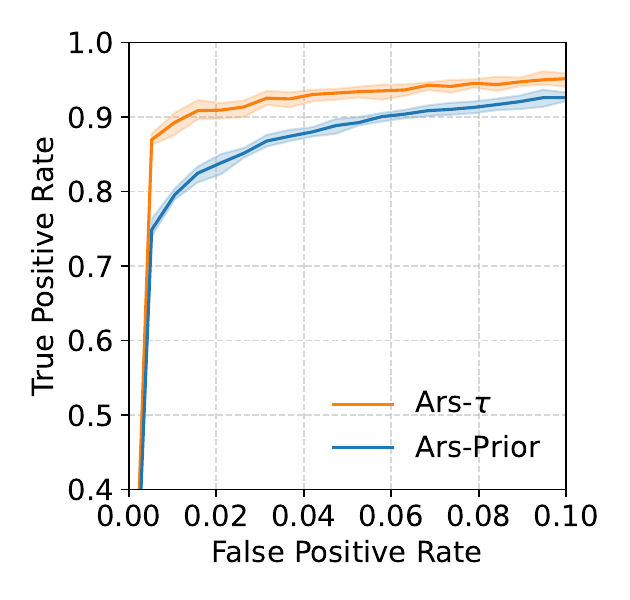}
        \caption*{Gumbel-max on \texttt{Llamma} models.}
    \end{minipage}
    \hspace{2mm}
    \begin{minipage}{0.42\textwidth}
        \centering
        \includegraphics[width=\linewidth]{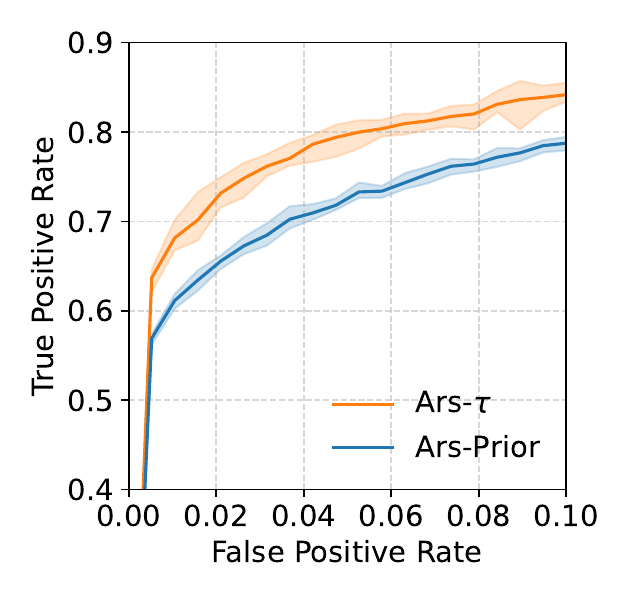}
        \caption*{Gumbel-max on \texttt{Gemma} models.}
    \end{minipage}

    \vspace{0.3cm}

    \begin{minipage}{0.42\textwidth}
        \centering
        \includegraphics[width=\linewidth]{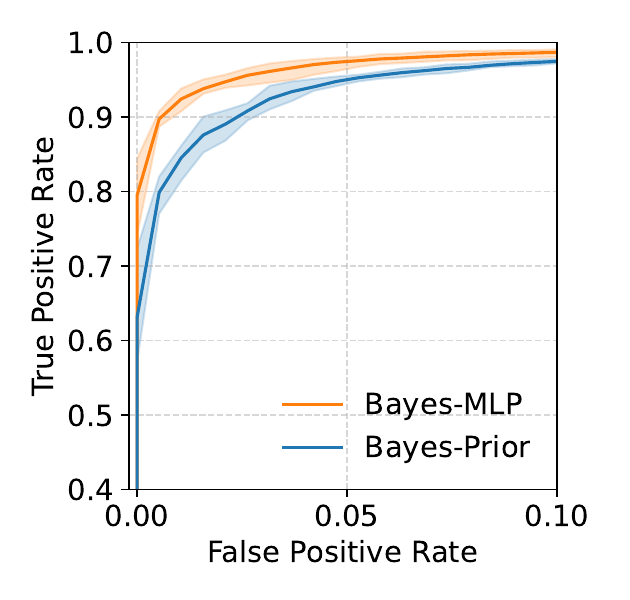}
        \caption*{SynthID on \texttt{Llamma} models.}
    \end{minipage}
    \hspace{2mm}
    \begin{minipage}{0.42\textwidth}
        \centering
        \includegraphics[width=\linewidth]{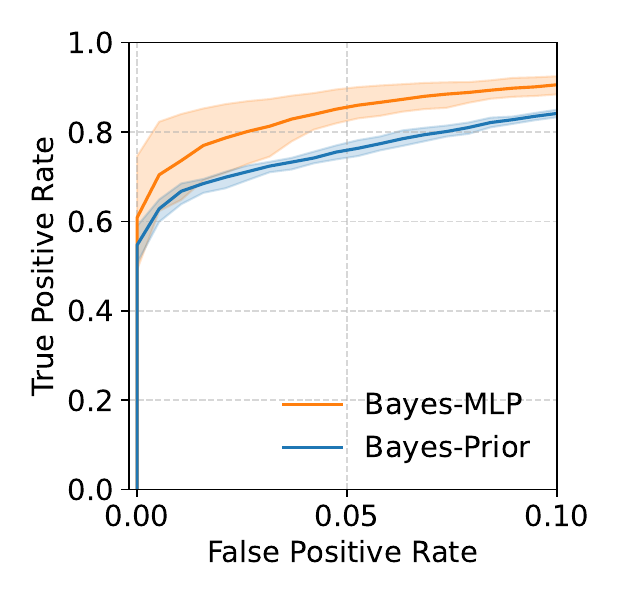}
        \caption*{SynthID on \texttt{Gemma} models.}
    \end{minipage}

    \caption{\new{ROC curves for watermark detection on the \texttt{ELI5} dataset. Gumbel-max performance is evaluated at a token length of 200, while SynthID performance is evaluated at a token length of 100. Orange curves show our method, and blue curves show the prior-based method.}}
    \label{figure: roc}
\end{figure}

\begin{figure}[ht]
    \centering
    \begin{minipage}[b]{0.32\textwidth}
        \centering
        \includegraphics[width=\linewidth]{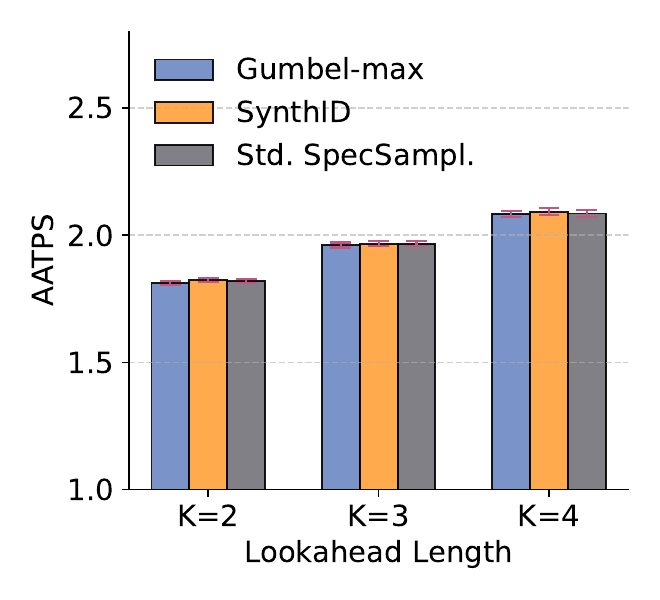}
    \end{minipage}
    \begin{minipage}[b]{0.32\textwidth}
        \centering
        \includegraphics[width=\linewidth]{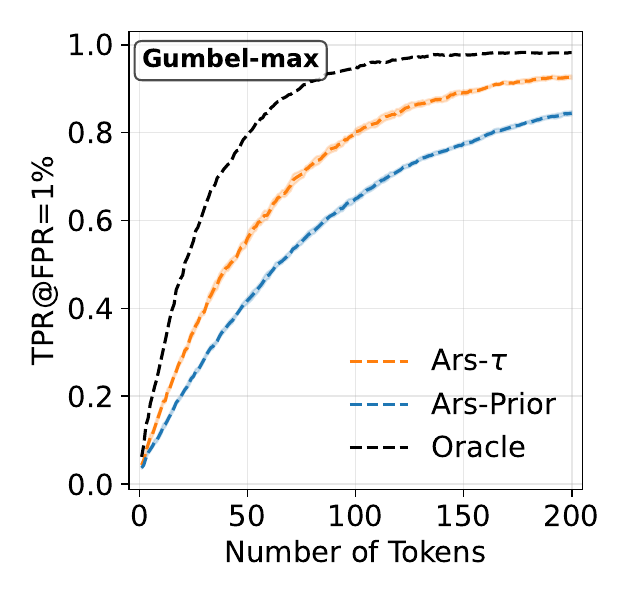}
    \end{minipage}
    \begin{minipage}[b]{0.32\textwidth}
        \centering
        \includegraphics[width=\linewidth]{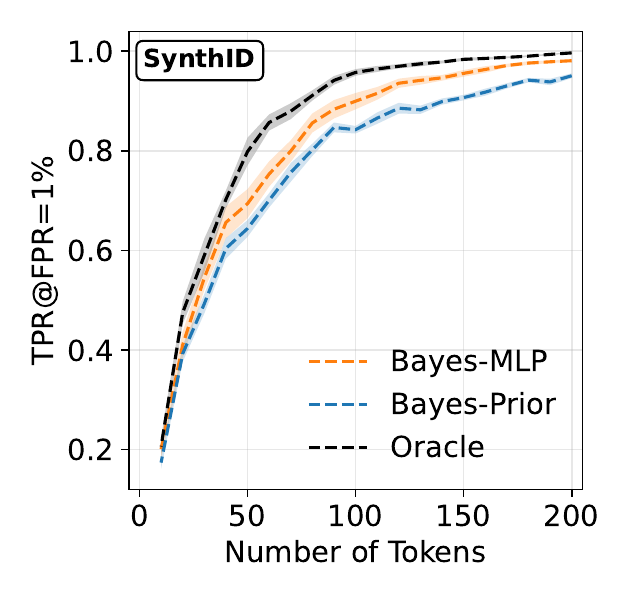}
    \end{minipage}
    \caption{\new{Experimental results for \texttt{Llama} models on the \texttt{C4} dataset. \textbf{Left}: Average Accepted Tokens Per Step (AATPS) of Alg.~\ref{alg:watermarked-spec} applied to the Gumbel-max and SynthID watermarks, compared with Standard Speculative Sampling (Std. SpecSampl). Error bars mark the 95\% confidence intervals. \textbf{Middle} and \textbf{Right}: Watermark detectability (TPR at FPR = 1\%) for Alg.~\ref{alg:watermarked-spec} on the Gumbel-max (middle) and SynthID (right). 
    Orange curves show our method, blue curves show the prior-based method, and black curves represent the ideal detector (Oracle) that always selects the correct test statistic.
    Shaded regions indicate the 95\% confidence intervals.}}
    \label{fig: oeg llama results}
\end{figure}

\begin{figure}[ht]
    \centering
    \begin{minipage}[b]{0.32\textwidth}
        \centering
        \includegraphics[width=\linewidth]{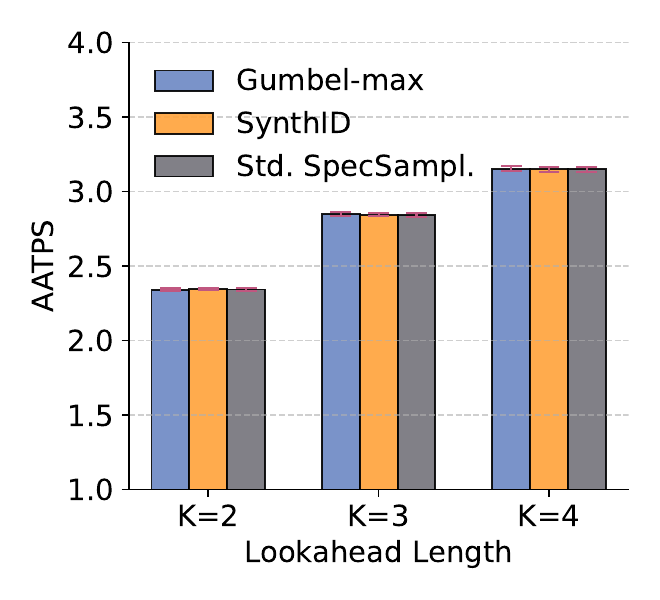}
    \end{minipage}
    \begin{minipage}[b]{0.32\textwidth}
        \centering
        \includegraphics[width=\linewidth]{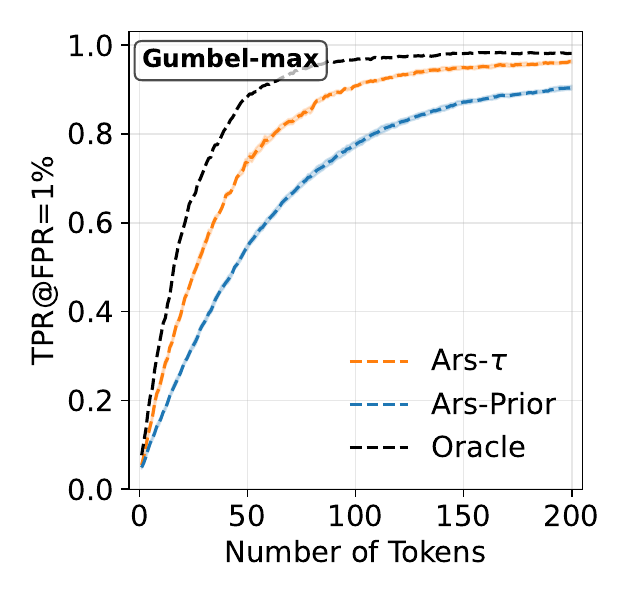}
    \end{minipage}
    \begin{minipage}[b]{0.32\textwidth}
        \centering
        \includegraphics[width=\linewidth]{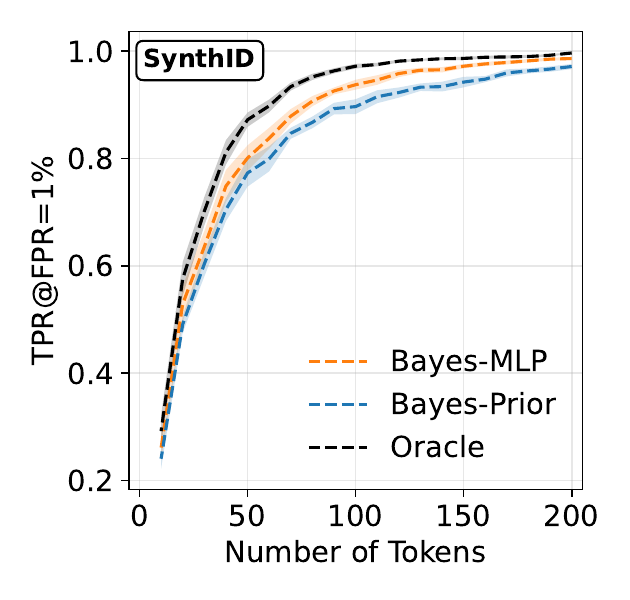}
    \end{minipage}
    \caption{\new{Experimental results for \texttt{Gemma} models on the \texttt{C4} dataset. \textbf{Left}: Average Accepted Tokens Per Step (AATPS) of Alg.~\ref{alg:watermarked-spec} applied to the Gumbel-max and SynthID watermarks, compared with Standard Speculative Sampling (Std. SpecSampl). Error bars mark the 95\% confidence intervals. \textbf{Middle} and \textbf{Right}: Watermark detectability (TPR at FPR = 1\%) for Alg.~\ref{alg:watermarked-spec} on the Gumbel-max (middle) and SynthID (right). 
    Orange curves show our method, blue curves show the prior-based method, and black curves represent the ideal detector (Oracle) that always selects the correct test statistic.
    Shaded regions indicate the 95\% confidence intervals.}}
    \label{fig: oeg gemma results}
\end{figure}

\begin{figure}[t]
    \centering

    \begin{minipage}{0.42\textwidth}
        \centering
        \includegraphics[width=\linewidth]{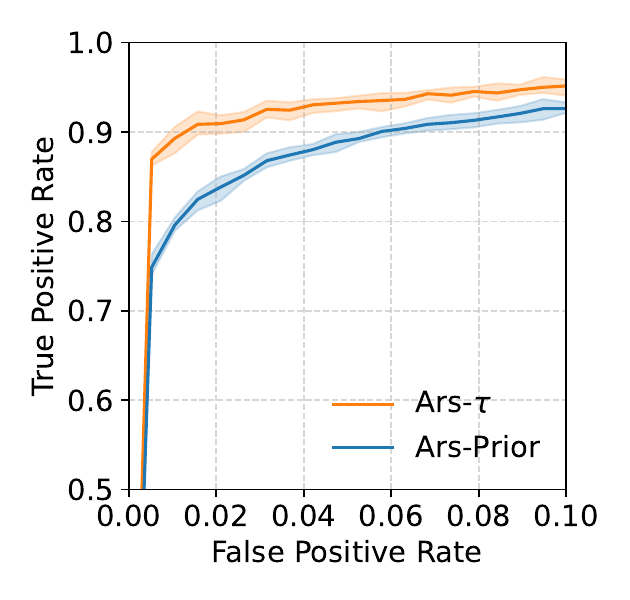}
        \caption*{Gumbel-max on \texttt{Llamma} models.}
    \end{minipage}
    \hspace{2mm}
    \begin{minipage}{0.42\textwidth}
        \centering
        \includegraphics[width=\linewidth]{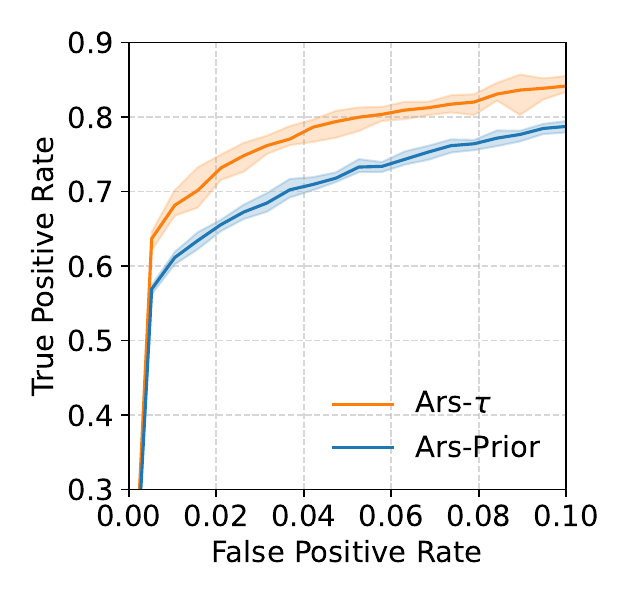}
        \caption*{Gumbel-max on \texttt{Gemma} models.}
    \end{minipage}

    \vspace{0.3cm}

    \begin{minipage}{0.42\textwidth}
        \centering
        \includegraphics[width=\linewidth]{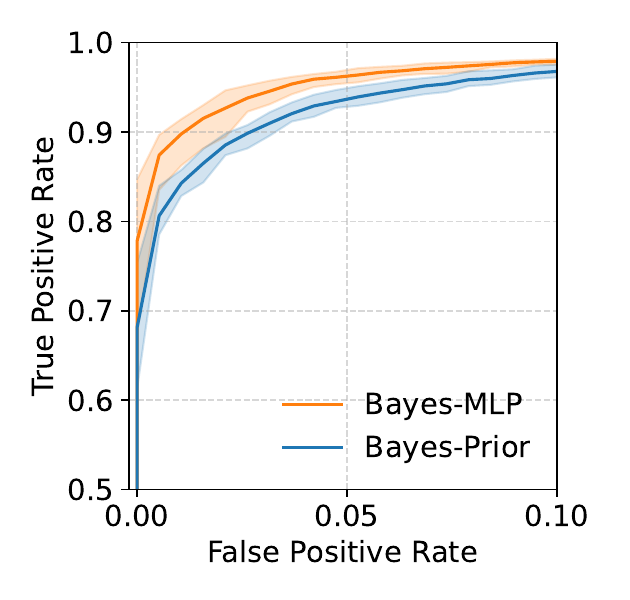}
        \caption*{SynthID on \texttt{Llamma} models.}
    \end{minipage}
    \hspace{2mm}
    \begin{minipage}{0.42\textwidth}
        \centering
        \includegraphics[width=\linewidth]{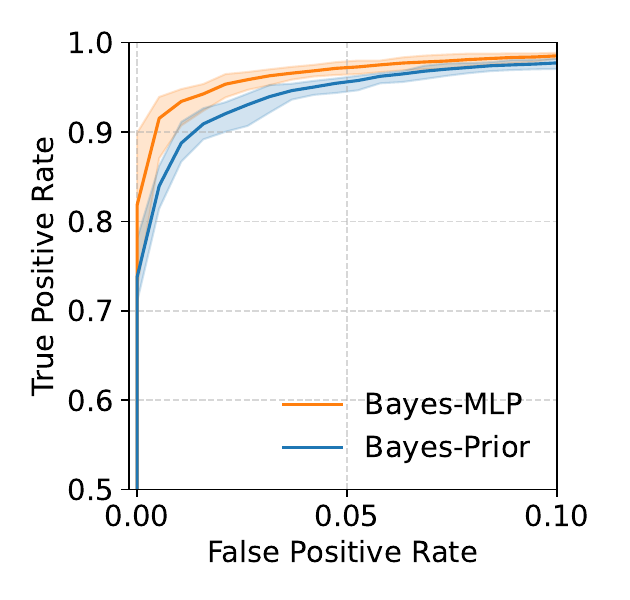}
        \caption*{SynthID on \texttt{Gemma} models.}
    \end{minipage}

    \caption{\new{ROC curves for watermark detection on the \texttt{C4} dataset. Gumbel-max performance is evaluated at a token length of 200, while SynthID performance is evaluated at a token length of 100. Orange curves show our method, and blue curves show the prior-based method.}}
    \label{figure: roc oeg}
\end{figure}

\clearpage
\begin{table*}[ht]
\caption{\new{Results of Alg.~\ref{alg:watermarked-spec} applied to the Gumbel-max and SynthID watermarks on the \texttt{C4} dataset, compared with Standard Speculative Sampling (Std. SpecSampl.) and basic watermarks. \textbf{AATPS}: Average Accepted Tokens Per Step; \textbf{PTT}: Per Token Time in millisecond; \textbf{LOGPPL}:  Log Perplexity.}}
\label{table: AATPS oeg}
\centering
{\footnotesize
\begin{tabular}{cclccc}
\toprule
\multicolumn{1}{c}{\new{\textbf{Models}}} &
\multicolumn{1}{c}{\new{\textbf{Lookahead}}}  &
\multicolumn{1}{l}{\new{\textbf{Method}}} &
\multicolumn{1}{c}{\new{\textbf{AATPS}}} &
\multicolumn{1}{c}{\new{\textbf{PTT}}} &
\multicolumn{1}{c}{\new{\textbf{LOGPPL}}}\\
\midrule
\multirow{11}{*}{\rotatebox{90}{\new{\texttt{Llama-7b} \textbf{/} \texttt{Llama-68m}}}}
& \multirow{2}{*}{\begin{tabular}[c]{@{}l@{}}{\new{basic}}\end{tabular}} & \new{Gumbel-max} & \new{$1.0 \pm 0.0$}  & \new{$22.98 \pm 0.040$} & \new{$2.22 \pm 0.023$}\\
& & \new{SynthID} & \new{$1.0 \pm 0.0$}  & \new{$40.27 \pm 0.307$} & \new{$2.29 \pm 0.015$}\\
\cmidrule{2-6}
& \multirow{3}{*}{\begin{tabular}[c]{@{}l@{}}{\new{$K = 2$}}\end{tabular}} &
\new{Gumbel-max} & \new{$1.8112 \pm 0.0080$} & \new{$17.65 \pm 0.062$} & \new{$2.21 \pm 0.023$}\\
& & \new{SynthID} & \new{$1.8228 \pm 0.0081$} & \new{$35.44 \pm 0.404$} & \new{$2.24 \pm 0.016$}\\
& & \new{Std. SpecSampl.} & \new{$1.8200 \pm 0.0071$} & \new{$16.01 \pm 0.044$} & \new{$2.28 \pm 0.023$}\\
\cmidrule{2-6}
& \multirow{3}{*}{\begin{tabular}[c]{@{}l@{}}{\new{$K = 3$}}\end{tabular}} &
\new{Gumbel-max} & \new{$1.9610 \pm 0.0099$} & \new{$17.70 \pm 0.070$} & \new{$2.22 \pm 0.025$}\\
& & \new{SynthID} & \new{$1.9662 \pm 0.0106$} & \new{$39.62 \pm 0.552$} & \new{$2.25 \pm 0.015$} \\
& & \new{Std. SpecSampl.} & \new{$1.9663 \pm 0.0101$} & \new{$15.69 \pm 0.056$} & \new{$2.26 \pm 0.022$} \\
\cmidrule{2-6}
& \multirow{3}{*}{\begin{tabular}[c]{@{}l@{}}{\new{$K = 4$}}\end{tabular}} &
\new{Gumbel-max} & \new{$2.0836 \pm 0.0125$} & \new{$18.21 \pm 0.081$} & \new{$2.22 \pm 0.024$}\\
& & \new{SynthID} & \new{$2.0917 \pm 0.0135$} & \new{$39.48 \pm 0.672$} & \new{$2.24 \pm 0.015$}\\
& & \new{Std. SpecSampl.} & \new{$2.0847 \pm 0.0128$} & \new{$15.83 \pm 0.068$} & \new{$2.23 \pm 0.023$}\\
\cmidrule{1-6}
\multirow{11}{*}{\rotatebox{90}{\new{\texttt{Gemma-7b} \textbf{/} \texttt{Gemma-2b}}}}
& \multirow{2}{*}{\begin{tabular}[c]{@{}l@{}}{\new{basic}}\end{tabular}} &
\new{Gumbel-max} & \new{$1.0 \pm 0.0$}  & \new{$29.19 \pm 0.006$} & \new{$2.52 \pm 0.026$}\\
& & \new{SynthID} & \new{$1.0 \pm 0.0$} & \new{$37.92 \pm 0.193$} & \new{$2.56 \pm 0.013$}\\
\cmidrule{2-6}
& \multirow{3}{*}{\begin{tabular}[c]{@{}l@{}}{\new{$K = 2$}}\end{tabular}} &
\new{Gumbel-max} & \new{$2.3415 \pm 0.0077$}  & \new{$25.67 \pm 0.061$} & \new{$2.53 \pm 0.026$}\\
& & \new{SynthID} & \new{$2.3468 \pm 0.0070$} & \new{$35.61 \pm 0.396$} & \new{$2.58 \pm 0.014$} \\
& & \new{Std. SpecSampl.} & \new{$2.3427 \pm 0.0074$} & \new{$23.57 \pm 0.054$} & \new{$2.60 \pm 0.027$}\\
\cmidrule{2-6}
& \multirow{3}{*}{\begin{tabular}[c]{@{}l@{}}{\new{$K = 3$}}\end{tabular}} &
\new{Gumbel-max} & \new{$2.8473 \pm 0.0127$} & \new{$28.19 \pm 0.110$} & \new{$2.53 \pm 0.026$}\\
& & \new{SynthID} & \new{$2.8450 \pm 0.0119$} & \new{$33.14 \pm 0.441$} & \new{$2.59 \pm 0.014$}\\
& & \new{Std. SpecSampl.} & \new{$2.8442 \pm 0.0129$} & \new{$2.50 \pm 0.026$}& \new{$2.50 \pm 0.026$}\\
\cmidrule{2-6}
& \multirow{3}{*}{\begin{tabular}[c]{@{}l@{}}{\new{$K = 4$}}\end{tabular}} &
\new{Gumbel-max} & \new{$3.1529 \pm 0.0176$}  & \new{$28.98 \pm 0.145$} & \new{$2.53 \pm 0.027$}\\
& & \new{SynthID} & \new{$3.1499 \pm 0.0165$} & \new{$36.84 \pm 0.472$} & \new{$2.59 \pm 0.015$}\\
& & \new{Std. SpecSampl.} & \new{$3.1494 \pm 0.0164$} & \new{$26.61 \pm 0.100$} & \new{$2.60 \pm 0.027$}\\
\bottomrule
\end{tabular}}
\vspace{-10pt}
\end{table*}

\subsection{Theoretical Speedup vs. Empirical Runtimes}
\new{Both Average Accepted Tokens Per Step (AATPS) and Per Token Time (PTT) reported in Table~\ref{table: AATPS},~\ref{table: AATPS oeg} reflect how much Alg.~\ref{alg:watermarked-spec} accelerates the generation process, but they capture different aspects of performance.
AATPS measures the number of accepted tokens in each generation loop and thus reflects the theoretical speedup, focusing primarily on the draft token acceptance rate, to which Alg.~\ref{alg:watermarked-spec} directly contributes.
Ideally, a higher acceptance rate leads to a greater overall speedup.
In contrast, PTT empirically measures the actual runtime of generation and can be affected by various factors, including (but not limited to) watermark sampling, token verification, and model switching (when using a single GPU).
Therefore, the observed speedup based on PTT may not perfectly align with that implied by AATPS.
We do not further investigate this discrepancy, as it falls beyond the scope of this work.}

\section{The Use of Large Language Models (LLMs)}
The LLMs were used solely to polish the writing, including grammar correction and improvements in clarity and style. The research contributions, including the design of methods, implementation, experiments, and analysis, were carried out entirely by the authors.

\end{appendix}

\end{document}